\theoremstyle{definition}
\newtheorem{proposition}{Proposition}
\newcommand{\nocontentsline}[3]{}
\newcommand{\tocless}[2]{\bgroup\let\addcontentsline=\nocontentsline#1{#2}\egroup}
\title{Continuous Forecasting via Neural Eigen Decomposition}
\date{}
\author{ 
Stav Belogolovsky \\
    Department of Electrical and Computer Engineering \\
    Technion Israel Institute of Technology \\
	Haifa, Israel 3200003 \\
	\texttt{stav.belo@gmail.com} \\
	\And
	Ido Greenberg\tt \\
	Department of Electrical and Computer Engineering \\
    Technion Israel Institute of Technology \\
	Haifa, Israel 3200003 \\
	\texttt{gido@campus.technion.ac.il} \\
	\And
	Danny Eitan \\
	Department of Physiology and Biophysics, \\
	Faculty of Medicine \\
    Technion Israel Institute of Technology \\
	Haifa, Israel 3200003 \\
	\texttt{biliary.colic@gmail.com} \\
	\And
	Shie Mannor \\
	Department of Electrical and Computer Engineering \\
    Technion Israel Institute of Technology \\
	Haifa, Israel 3200003 \\
	\texttt{shie@ee.technion.ac.il} \\
}
\begin{document}
\maketitle

\begin{abstract}
Neural differential equations predict the \textit{derivative} of a stochastic process. This allows irregular forecasting with arbitrary time-steps. However, the expressive temporal flexibility often comes with a high sensitivity to noise. In addition, current methods model measurements and control together, limiting generalization to different control policies. These properties severely limit applicability to medical treatment problems, which require reliable forecasting given high noise, limited data and changing treatment policies. We introduce the Neural Eigen-SDE algorithm (NESDE), which relies on piecewise linear dynamics modeling with spectral representation. NESDE provides control over the expressiveness level; decoupling of control from measurements; and closed-form continuous prediction in inference. NESDE is demonstrated to provide robust forecasting in both synthetic and real high-noise medical problems. Finally, we use the learned dynamics models to publish simulated medical gym environments.
\end{abstract}

\keywords{sequential prediction, stochastic differential equations, Kalman filter, recurrent neural networks, medical drug control}
%%%%%%%%%%%%%%%%%%%%%%%%%%%%%%%%%%%%%%%%%%%%%%%%%
%%%%%%%%%%%%%%%%%%%%%%%%%%%%%%%%%%%%%%%%%%%%%%%%%
\doparttoc % Tell to minitoc to generate a toc for the parts
\faketableofcontents % Run a fake tableofcontents command for the partocs

\section{Introduction}
\label{sec:intro}

Sequential forecasting in irregular points of time is required in many real-world problems, such as medical applications.
Consider a patient whose physiological or biochemical state requires continuous monitoring, while blood tests are only available with a limited frequency.
Common forecasting approaches, such as Kalman filtering \citep{KF} and recurrent neural networks \citep{RNN}, operate in constant time-steps; in their standard forms, they cannot provide predictions at arbitrarily specified points of time.
By contrast, neural ordinary differential equation methods (neural-ODE, \citet{chen2018neural,liu2019neural}) predict the \textit{derivative} of the process. The estimated derivative can be used to make predictions with flexible time-steps, which indeed can be used for medical forecasting \citep{lu2021neural}.

However, real-world forecasting remains a challenge for several reasons.
First, neural-ODE methods are often data-hungry: they aggregate numerous derivatives provided by a non-linear neural network, which is often sensitive to noise.
Training over a large dataset may stabilize the predictions, but data is often practically limited.
Second, most neural-ODE methods only provide a point-estimate, while uncertainty estimation is often required as well.
Third, the variation between patients often requires a personalized modeling that takes the patient properties into account.
Fourth, for every single prediction, the neural-ODE runs a numeric ODE solver, along with multiple neural network calculations of the derivative. This computational overhead in inference may limit latency-sensitive applications.

A fifth challenge comes from control.
In the framework of passive forecasting, and in particular in neural-ODE, a control signal is often considered part of the observation \citep{gru_ode}.
However, this approach raises difficulties if the control is observed at different times or more frequently than other observations.
If the control is part of the model output, it may also bias the train loss away from the true objective.
Finally, by treating control and observations together, the patterns learned by the model may overfit the control policy used in the data -- and generalize poorly to new policies.

Generalization to out-of-distribution control policies is particularly essential when the predictive model supports decision making, as the control policy may indeed be modified according to the model.
Such decision making is an important use-case of sequential prediction: model-based reinforcement learning and control problems require a reliable dynamics model \citep{model_based_rl,model_based_rl_bio}, in particular in risk-sensitive control \citep{rl_healthcare,cesor,greenberg2021detecting}.
For example, biochemical forecasting may be used to tailor a medical treatment for the patient.

\begin{wraptable}{l}{0.5\textwidth}
\vspace{-0.25cm}
\begin{tabular}{|l|l|}
\hline
\multicolumn{1}{|c|}{\textbf{Challenge}} & \multicolumn{1}{c|}{\textbf{Solution}}                                                                      \\ \hline
Sample efficiency                          & \begin{tabular}[c]{@{}l@{}}Regularized dynamics:\\ piecewise-linear with\\ complex eigenvalues\end{tabular} \\ \hline
Uncertainty estimation                     & \begin{tabular}[c]{@{}l@{}}Probabilistic Kalman\\ filtering\end{tabular}                                      \\ \hline
Personalized modeling                      & \begin{tabular}[c]{@{}l@{}}Hyper-network with\\ high-level features input\end{tabular}                      \\ \hline
Fast continuous inference                  & \begin{tabular}[c]{@{}l@{}}Spectral representation\\ with closed-form solution\end{tabular}                 \\ \hline
Control generalization                     & \begin{tabular}[c]{@{}l@{}}Decoupling control \\ from other inputs\end{tabular}                              \\ \hline
\end{tabular}
\caption{\footnotesize A summary of the features of NESDE.}
\label{tab:NESDE}
\vspace{-0.25cm}
\end{wraptable}

\cref{sec:NESDE} introduces the Neural Eigen-SDE algorithm (\textbf{NESDE}) for continuous forecasting, which is designed to address the challenges listed above. NESDE relies on a piecewise-linear stochastic differential equation (SDE), represented in spectral form. The dynamics operator's linearity increases robustness to noise, yet the model remains expressive by making the dynamics \textit{piecewise}-linear and adding latent variables. The linear dynamics operator is occasionally updated by a hyper-neural-network, which captures high-level data such as patient information, allowing personalized forecasting. The update frequency determines the bias-variance tradeoff between simplicity and expressiveness. The dynamics operator is predicted directly in spectral form, permitting a fast closed-form solution at any point of time. The SDE derives a probabilistic model similar to Kalman filtering, which provides uncertainty estimation. Finally, the SDE decouples the control signal from other observations, to discourage the model from learning control patterns that may be violated under out-of-distribution control policies. \cref{tab:NESDE} summarizes all these features.

\cref{sec:synthetic_experiments} tests NESDE against both neural-ODE methods and recurrent neural networks.
NESDE demonstrates robustness to both noise (by learning from little data) and out-of-distribution control policies. In \cref{app:interpretability}, the spectral SDE model of NESDE is shown to enable potential domain knowledge and provide interpretability of the learned model -- both via the predicted SDE eigenvalues. \cref{app:regular_lstm} demonstrates the disadvantage of discrete (non-differential) methods in continuous forecasting.

In \cref{sec:real_experiments}, NESDE demonstrates high prediction accuracy in two medical forecasting problems with noisy and irregular real-world data. One problem -- blood coagulation prediction given Heparin dosage -- is essential for treating life-threatening blood clots, with dire implications to either underdosage (clot progression) or overdosage (severe bleeding). Yet, measurements are typically available via blood tests only once every few hours. The other problem -- prediction of the Vancomycin (antibiotics) levels for patients who received it -- which could reduce the risk of intoxication, while keeping effective levels of the antibiotics. All experiments are available in \href{https://github.com/NESDE/NESDE}{\underline{GitHub}}.

\textbf{Contribution:}
\begin{itemize}
    \item We characterize the main challenges in continuous forecasting for model-based control in high-noise domains.
    \item We design the novel Neural Eigen-SDE algorithm (NESDE), which addresses the challenges as summarized in \cref{tab:NESDE} and demonstrated over a variety of experiments.
    \item We use NESDE to improve the modeling accuracy of two medication dosing processes. Based on the learned models, we publish simulated \href{https://github.com/NESDE/NESDE/tree/main/Simulator%20Suite}{\underline{{gym environments}}} for future research of control in healthcare. % and monitoring. 
\end{itemize}

\subsection{Related Work}
\label{sec:related_work}
\textbf{Classic filtering:}
Classic models for sequential prediction in time-series include ARIMA models~\citep{ARMA} and the Kalman filter (KF)~\citep{KF}. The KF provides probabilistic distributions and in particular uncertainty estimation. While the classic KF is limited to linear dynamics, many non-linear extensions have been suggested~\citep{deep_kf,pose_estimation,KalmanNet,okf}. However, such models are typically limited to a constant prediction horizon (time-step). Longer-horizon predictions are often made by applying the model recursively~\citep{recursive_prediction,multistep_forecasting}, which poses several limitations. First, it is limited to integer multiplications of the time-interval. Second, if many predictions are required between consecutive observations, and the training is supervised by observations, then the learning becomes sparse through long recursive sequences. This poses a significant challenge to many optimization methods~\citep{vanishing_gradients}, as also demonstrated in \cref{app:regular_lstm}.
Third, recursive computations may be slow in inference.

Limited types of irregularity can also be handled by KF with intermittent observations~\citep{intermittent_KF_stability,intermittent_KF} or periodical time-steps~\citep{lifted_KF}.  

\textbf{Recurrent neural networks:}
Sequential prediction is often addressed via neural network models, relying on architectures such as RNN~\citep{RNN}, LSTM~\citep{LSTM} and transformers~\citep{transformer}. LSTM, for example, is a key component in many SOTA algorithms for non-linear sequential prediction \citep{process_prediction_review}. LSTM can be extended to a filtering framework to alternately making predictions and processing observations, and even to provide uncertainty estimation \citep{ManeuveringTargetTracking2}. However, these models are typically limited to constant time-steps, and thus suffer from the limitations discussed above.

\textbf{Differential equation models:}
Parameterized ODE models can be optimized by propagating the gradients of a loss function through an ODE solver~\citep{chen2018neural,liu2019neural,latent_odes}. By predicting the process \textit{derivative} and using an ODE solver in real-time, these methods can choose the effective time-steps flexibly. Uncertainty estimation can be added via process variance prediction~\citep{gru_ode}. However, since neural-ODE methods learn a non-linear dynamics model, the ODE solver operates numerically and recursively on top of multiple neural network calculations. This affects running time, training difficulty and data efficiency as discussed above.

Our method uses SDE with piecewise-linear dynamics (note this is \textit{different} from a piecewise linear process). The linear dynamics per time interval permit efficient and continuous closed-form forecasting of both mean and covariance. \citet{schirmer2022modeling} also rely on a linear ODE model, but only support operators with real-valued eigenvalues (which limits the modeling of periodic processes), and do not separate control signal from observations (which limits generalization to out-of-distribution control). Our piecewise linear architecture, tested below against alternative methods including \citet{gru_ode} and \citet{schirmer2022modeling}, is demonstrated to be more robust to noisy, sparse or small datasets, even under out-of-distribution control policies.

Neural-ODE models are particularly useful for medical applications with irregular data \citep{lu2021neural}. Yet, the effect of Heparin on blood coagulation is usually modeled either using discrete models~\citep{nemati2016optimal} or manually based on domain knowledge~\citep{Delavenne2017}.

\section{Preliminaries: Linear SDE}
\label{sec:preliminaries}
We consider a particular case of the general linear Stochastic Differential Equation (SDE):
\begin{equation}
\label{eq:lin_sde}
    dX(t) = \left[ A\cdot X(t) + \tilde{u}(t) \right] + dW(t)
\end{equation}
where $X:\mathbb{R}\rightarrow \mathbb{R}^n$ is a time-dependent state; $A\in\mathbb{R}^{n\times n}$ is a fixed dynamics operator; $\tilde{u}:\mathbb{R}\rightarrow \mathbb{R}^n$ is the control signal; and $dW:\mathbb{R}\rightarrow \mathbb{R}^n$ is a Brownian motion vector with covariance $Q\in \mathbb{R}^{n\times n}$.

General SDEs can be solved numerically using the first-order approximation $\Delta X(t) \approx \Delta t \cdot dX(t)$, or using more delicate approximations~\citep{wang1998runge}. The linear SDE, however, and in particular \cref{eq:lin_sde}, can be solved analytically~\citep{linear_sde_solution}:
\begin{align}
\label{eq:sde_sol}
\begin{split}
    X(t) &= \Phi(t) \left( \Phi(t_0)^{-1}X(t_0) + \vphantom{\int_{t_0}^t} \right. 
    \left. \int_{t_0}^t \Phi(\tau)^{-1}\tilde{u}(\tau)d\tau + \int_{t_0}^t \Phi(\tau)^{-1}dW(\tau) \right)
\end{split}
\end{align}
where $X(t_0)$ is an initial condition, and $\Phi(t)$ is the eigenfunction of the system. More specifically, if $V$ is the matrix whose columns $\{v_i\}_{i=1}^n$ are the eigenvectors of $A$, and $\Lambda$ is the diagonal matrix whose diagonal contains the corresponding eigenvalues $\lambda=\{\lambda_i\}_{i=1}^n$, then 
\begin{align}
\label{eq:eigenfunction}
\begin{split}
    \Phi(t) &= Ve^{\Lambda t} = 
    \begin{pmatrix} | & | & | & | & | \\ v_1\cdot e^{\lambda_1 t} & \dots & v_i\cdot e^{\lambda_i t} & \dots & v_n\cdot e^{\lambda_n t} \\ | & | & | & | & | \end{pmatrix}
\end{split}
\end{align}

If the initial condition is given as $X(t_0)\sim N(\mu_0,\Sigma_0)$, \cref{eq:sde_sol} becomes
\begin{align}
\label{eq:normal_sol}
\begin{split}
    X(t) & \sim N\left(\mu(t),\Sigma(t)\right) \\
    \mu(t) & = \Phi(t) \left( \Phi(t_0)^{-1}\mu_0 + \int_{t_0}^t \Phi(\tau)^{-1}\tilde{u}(\tau)d\tau \right) \\
    \Sigma(t) & = \Phi(t) \Sigma^\prime(t) \Phi(t)^\top
\end{split}
\end{align}
where
\begin{align*}
\begin{split}
    \Sigma^\prime(t) &= \Phi(t_0)^{-1} \Sigma_0 (\Phi(t_0)^{-1})^\top + 
     \int_{t_0}^t \Phi(\tau)^{-1}Q (\Phi(\tau)^{-1})^\top d\tau
\end{split}
\end{align*}

Note that if $\forall i: \lambda_i<0$ and $\tilde{u}\equiv0$, we have $\mu(t) \xrightarrow{t \to \infty} 0$.
In addition, if $\lambda$ is complex, \cref{eq:normal_sol} may produce a complex solution; \cref{sec:complex_eigens} explains how to use a careful parameterization to only calculate the real solutions.

\section{Problem Setup: Sparsely-Observable SDE}
\label{sec:problem_setup}

\begin{wrapfigure}{r}{0.5\textwidth}
\vspace{-0.3cm}
\centering
\begin{subfigure}{.49\linewidth}
  \centering
  \includegraphics[width=1\linewidth]{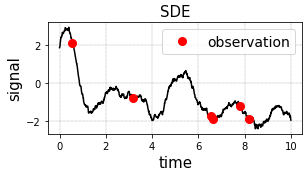}
  \caption{}
\end{subfigure}
\begin{subfigure}{.49\linewidth}
  \centering
  \includegraphics[width=1\linewidth]{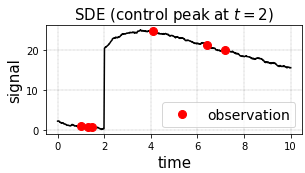}
  \caption{}
\end{subfigure}
\vspace{-0.2cm}
\caption{\footnotesize Samples of sparsely observed SDEs: the Brownian noise and the sparse observations pose a major challenge for learning the underlying SDE dynamics. Efficient learning from external trajectories data is required, as the current trajectory often does not contain sufficient observations.}
\label{fig:SDE_samples}
\vspace{-0.3cm}
\end{wrapfigure}

We focus on online sequential prediction of a process $Y(t)\in\mathbb{R}^m$. To predict $Y(t_0)$ at a certain $t_0$, we can use noisy observations $Y(t)$ (at given times $t<t_0$), as well as a control signal $u(t)\in\mathbb{R}^k$ $(\forall t<t_0)$; offline data of $Y$ and $u$ from other sequences; and one sample of contextual information $C\in \mathbb{R}^{d_c}$ per sequence (capturing properties of the whole sequence). \textbf{The dynamics of $Y$ are unknown and may vary between sequences}. For example, sequences may represent different patients, each with its own dynamics; $C$ may represent patient information; and the objective is ``zero-shot" learning upon arrival of a sequence of any new patient. In addition, \textbf{the observations within a sequence are both irregular and sparse}: they are received at arbitrary points of time, and are sparse in comparison to the required prediction frequency (i.e., continuous forecasting, as illustrated in \cref{fig:SDE_samples}).

To model the problem, we assume the observations $Y(t)$ to originate from an unobservable latent process $X(t)\in\mathbb{R}^n$ (where $n>m$ is a hyperparameter). More specifically: 
\begin{align}
\label{eq:model}
\begin{split}
    dX(t) &= F_C\big(X(t),u(t)\big) \\
    Y(t) &= X(t)_{1:m} \\
    \hat{Y}(t) &= Y(t) + \nu_C(t)
\end{split}
\end{align}
where $F_C$ is a stochastic dynamics operator (which may depend on the context $C$); $Y$ is simply the first $m$ coordinates of $X$; $\hat{Y}$ is the corresponding observation; and $\nu_C(t)$ is its i.i.d Gaussian noise with zero-mean and (unknown) covariance $R_C\in\mathbb{R}^{m \times m}$ (which may also depend on $C$). Our goal is to predict $Y$, where the dynamics $F_C$ are unknown and data of the latent subspace of $X$ is unavailable. In cases where data of $Y$ is not available, we measure our prediction accuracy against $\hat{Y}$. Notice that the control $u(t)$ is modeled separately from $\hat{Y}$, is not part of the prediction objective, and does not depend on $X$ or its dynamics.

\section{Neural Eigen-SDE Algorithm}
\label{sec:NESDE}

\textbf{Model:}
In this section, we introduce the Neural Eigen-SDE algorithm (NESDE, shown in \cref{alg:NESDE} and \cref{fig:NESDE_mod}). NESDE predicts the signal $Y(t)$ of \cref{eq:model} continuously at any required point of time $t$. It relies on a piecewise linear approximation which reduces \cref{eq:model} into \cref{eq:lin_sde}:
\begin{align}
\label{eq:NESDE_model}
\begin{split}
    &\forall t\in \mathcal{I}_i: \\
    &dX(t) = \left[ A_i\cdot (X(t)-\alpha) + B\cdot u(t) \right] + dW(t)
\end{split}
\end{align}
where $\mathcal{I}_i=\left(t_i,t_{i+1}\right)$ is a time interval, $dW$ is a Brownian noise with covariance matrix $Q_i$, and $A_i\in\mathbb{R}^{n\times n},B\in\mathbb{R}^{n\times k},Q_i\in\mathbb{R}^{n\times n},\alpha\in\mathbb{R}^n$ form the linear dynamics model corresponding to the interval $\mathcal{I}_i$. In terms of \cref{eq:lin_sde}, we substitute $A\coloneqq A_i$ and $\tilde{u}\coloneqq Bu-A_i\alpha$. Note that if $A_i$ is a stable system and $u\equiv 0$, the asymptotic state is $\mu(t) \xrightarrow{t \to \infty} \alpha$. To solve \cref{eq:NESDE_model} within every $\mathcal{I}_i$, NESDE has to learn the parameters $\{A_i,Q_i\}_i,\alpha,B$.

\begin{wrapfigure}{r}{0.5\textwidth}
    \centering
    \includegraphics[width=1.0\linewidth]{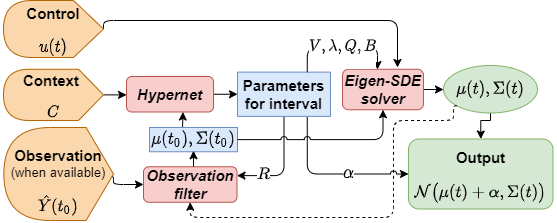}
    \vspace{-0.3cm}
    \caption{\footnotesize NESDE algorithm. Hypernet uses the context and the estimated state to determine the SDE parameters; Eigen-SDE solver uses them to make predictions for the next time-interval; the filter updates the state upon arrival of a new observation, which initiates a new interval. The figure uses orange for input, green for output, red for algorithm components and blue for inner parameters.}
    \label{fig:NESDE_mod}
    \vspace{-0.4cm}
\end{wrapfigure}

The end of $\mathcal{I}_{i-1}$ typically represents one of two events: either an update of the dynamics $A$ (allowing the piecewise linear dynamics), or the arrival of a new observation. A new observation at time $t_i$ triggers an update of $X(t_i)$ according to the conditional distribution $X(t_i)|\hat{Y}(t_i)$ (this is a particular case of Kalman filtering, as shown in \cref{sec:observation_filter}). Then, the prediction continues for $\mathcal{I}_{i}$ according to \cref{eq:NESDE_model}. Note that once $X(t_0)$ is initialized to have a Normal distribution, it remains Normally-distributed throughout both the process dynamics (\cref{eq:normal_sol}) and the observations filtering (\cref{sec:observation_filter}). This allows NESDE to efficiently capture the distribution of $X(t)$, where the estimated covariance represents the uncertainty.

\textbf{Eigen-SDE solver -- spectral dynamics representation:}
A key feature of NESDE is that $A_i$ is only represented implicitly through the parameters $V,\lambda$ defining its eigenfunction $\Phi(t)$ of \cref{eq:eigenfunction} (we drop the interval index $i$ with a slight abuse of notation). The spectral representation allows \cref{eq:normal_sol} to solve $X(t)$ analytically for any $t\in\mathcal{I}_i$ at once: the predictions are not limited to predefined times, and do not require recursive iterations with constant time-steps. This is particularly useful in the sparsely-observable setup of \cref{sec:problem_setup}, as it lets numerous predictions be made \textit{at once} without being ``interrupted" by a new measurement.

The calculation of \cref{eq:normal_sol} requires efficient integration. Many SDE solvers apply recursive numeric integration~\citep{chen2018neural,gru_ode}. In NESDE, however, thanks to the spectral decomposition, the integration only depends on known functions of $t$ instead of $X(t)$ (\cref{eq:normal_sol}), hence recursion is not needed, and the computation can be paralleled. Furthermore, if the control $u$ is constant over an interval $\mathcal{I}_i$ (or has any other analytically-integrable form), \cref{sec:integrator} shows how to calculate the integration \textit{analytically}. Piecewise constant $u$ is common, for example, when the control is updated along with the observations.

In addition to simplifying the calculation, the spectrum of $A_i$ carries significant meaning about the dynamics. For example, negative eigenvalues correspond to a stable solution, whereas imaginary ones indicate periodicity. Note that the (possibly-complex) eigenvalues must be constrained to represent a real matrix $A_i$. The constraints and the calculations in the complex space are detailed in \cref{sec:complex_eigens}.

Note that if the process $X(t)$ actually follows the piecewise linear model of \cref{eq:NESDE_model}, and the model parameters are known correctly, then the Eigen-SDE solver trivially returns the optimal predictions. The complete proposition and proof are provided in \cref{sec:solver}.
\begin{proposition}[Eigen-SDE solver optimality]
\label{prop:optimality}
    If $X(t)$ follows \cref{eq:NESDE_model} with the same parameters used by the Eigen-SDE solver, then under certain conditions, the solver prediction at any point of time optimizes both the expected error and the expected log-likelihood.
    \vspace{-0.2cm}
\end{proposition}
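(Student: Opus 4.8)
The plan is to reduce the proposition to two classical facts from Bayesian estimation, once we establish the key structural claim that the distribution reported by the Eigen-SDE solver coincides \emph{exactly} with the true posterior distribution of $X(t)$ given the observation history. Concretely, I read the two optimality statements as follows: the solver's mean $\mu(t)$ is the Bayes-optimal point predictor under squared loss (minimizing the expected error $\mathbb{E}[\|X(t)-\hat{X}\|^2]$), and the solver's full Gaussian law $N(\mu(t),\Sigma(t))$ is the Bayes-optimal predictive distribution under the log-loss (maximizing the expected log-likelihood $\mathbb{E}[\log q(X(t))]$ over all candidate densities $q$). Both claims are standard once the solver output is identified with the genuine posterior; the real work is in that identification.

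First I would prove, by induction over the interval/observation structure, that at every time $t$ the solver's predicted law equals the conditional law of $X(t)$ given all observations received up to $t$ (the control being deterministic and decoupled). The base case is the initialization $X(t_0)\sim N(\mu_0,\Sigma_0)$. For the inductive step there are two kinds of events. During an interval $\mathcal{I}_i$, where $X$ evolves under the linear SDE of \cref{eq:NESDE_model}, the exact solution \cref{eq:normal_sol} maps a Gaussian prior to a Gaussian with the mean and covariance the solver propagates; since the solver uses the true parameters $A_i,Q_i,\alpha,B$, this propagation is exact. At an observation time $t_i$, the Kalman update of \cref{sec:observation_filter} produces exactly the conditional law $X(t_i)\mid \hat{Y}(t_i)$, which is again Gaussian. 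Chaining these two operations across all intervals yields that the solver's law is the true posterior at every $t$.

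Given this identification, the expected-error claim follows from the projection/orthogonality property of conditional expectation: the conditional mean $\mathbb{E}[X(t)\mid \mathcal{F}_t]$ is the unique minimizer of $\mathbb{E}[\|X(t)-\hat{X}\|^2]$ among $\mathcal{F}_t$-measurable estimators $\hat{X}$, and the solver reports precisely $\mu(t)=\mathbb{E}[X(t)\mid\mathcal{F}_t]$. The expected-log-likelihood claim follows from Gibbs' inequality (equivalently, non-negativity of the KL divergence): for the true density $p$ and any candidate density $q$, $\mathbb{E}_p[\log q]\le \mathbb{E}_p[\log p]$ with equality iff $q=p$, so the maximizer of the expected log-likelihood is the true posterior density, which is exactly what the solver reports. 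Both steps reduce to one-line invocations once the identification is in place.

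The main obstacle is the rigorous execution of the first step, namely ensuring that the propagation is exact rather than approximate; this is where the ``certain conditions'' enter. I expect them to be: (i) the initial condition and the observation noise are Gaussian, so the posterior stays within the Gaussian family closed under both \cref{eq:normal_sol} and the Kalman update; (ii) the integral $\int_{t_0}^t \Phi(\tau)^{-1}\tilde{u}(\tau)\,d\tau$ in \cref{eq:normal_sol} is evaluated exactly, which holds when $u$ is piecewise constant (or otherwise analytically integrable) as in \cref{sec:integrator}; and (iii) the solver's parameters equal the true generative parameters, as assumed in the hypothesis. A secondary technical point is handling complex eigenvalues so that the propagated mean and covariance remain real and correspond to a real $A_i$; I would defer this to the parameterization of \cref{sec:complex_eigens} and argue that it leaves the distributional identity unchanged.
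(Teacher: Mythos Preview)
Your proposal is correct and follows essentially the same approach as the paper: an induction over intervals showing that the solver's Gaussian output coincides with the true (conditional) distribution of $X(t)$---via the exact linear-SDE propagation of \cref{eq:normal_sol} within each interval and the Kalman conditioning at observation times---followed by the standard facts that the mean minimizes expected squared error and the true density maximizes expected log-likelihood. Your articulation of the ``certain conditions'' (Gaussian initialization, piecewise-constant $u$ for analytic integration, and matching parameters) exactly matches the hypotheses of the paper's full statement in \cref{sec:solver}, and your invocation of the projection property and Gibbs' inequality makes explicit what the paper leaves as one-line assertions.
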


\begin{proof}[Proof Sketch]
If the process $X(t)$ follows \cref{eq:NESDE_model}, the Eigen-SDE solver output corresponds to the true distribution $X(t)\sim N(\mu(t),\Sigma(t))$ for any $t$. Thus, both the expected error and the expected log-likelihood are optimal.
\vspace{-0.2cm}
\end{proof}
\vspace{-0.1cm}

\textbf{Updating solver and filter parameters:}
NESDE is responsible for providing the parameters $V,\lambda,Q,B,\alpha$ to the Eigen-SDE solver, as well as the noise $R$ to the observation filter. As NESDE assumes a \textit{piecewise} linear model, it separates the time into intervals $\mathcal{I}_i=\left(t_i,t_{i+1}\right)$ (the interval length is a hyperparameter), and uses a dedicated model to predict new parameters at the beginning $t_i$ of every interval.

The model receives the current state $X(t_i)$ and the contextual information $C$, and returns the parameters for $\mathcal{I}_i$. Specifically, we use Hypernet~\citep{ha2016hypernetworks}, where one neural network $g_1(C;\Theta)$ returns the weights of a second network: $(V,\lambda,Q,B,\alpha,R) \coloneqq g_2(X;W)=g_2(X;g_1(C;\Theta))$. For the initial state, where $X$ is unavailable, we learn a \textit{state prior} from $C$ by a dedicated network; this prior helps NESDE to function as a ``zero-shot" model.

The Hypernet module implementation gives us control over the non-lineraity and non-stationarity of the model. In particular, in our current implementation only $V,\lambda,Q$ are renewed every time interval. $\alpha$ (asymptotic signal) and $R$ (observation noise) are only predicted once per sequence, as we assume they are independent of the state. The control mapping $B$ is assumed to be a global parameter.

\textbf{Training:}
As described above, the learnable parameters of NESDE are the control mapping $B$ and Hypernet's parameters $\Theta$ (which in turn determine the rest of the solver and filter parameters). To optimize them, the training relies on a dataset of sequences of control signals $\{u_{seq}(t_j)\}_{seq,j}$ and (sparser and possibly irregular) states and observations $\{(Y_{seq}(t_j), \hat{Y}_{seq}(t_j))\}_{seq,j}$ (if $Y$ is not available, we use $\hat{Y}$ instead as the training target). 
\begin{wrapfigure}{l}{0.43\textwidth}
\vspace{-0.3cm}
\begin{minipage}[h]{0.43\textwidth}
\begin{algorithm}[H]
   \caption{NESDE}
    \label{alg:NESDE}
\begin{algorithmic}
    \STATE \textbf{Input:} context $C$; control signal $u(t)$; update times $\mathcal{I}\in\mathbb{R}^T$; prediction times $\{P_{\mathcal{I}_i}\}_{\mathcal{I}_i\in\mathcal{I}}$
    \STATE \textbf{Initialize:} $\mu, \Sigma, \alpha, R \leftarrow \mathbf{Prior}(C)$
    \FOR{$\mathcal{I}_i$ in $\mathcal{I}$:}
        \STATE $V, \lambda, Q, B, \alpha, R \leftarrow \mathbf{Hypernet}(C,\mu,\Sigma)$
        \FOR{$t$ in $P_{\mathcal{I}_i}$}
        \STATE $\mu_t, \Sigma_t \leftarrow \mathbf{ESDE}\big(\mu,\Sigma,u,t;V,\lambda,Q,B\big)$
            \STATE \textbf{predict:} $\tilde{Y}_t\sim \mathcal{N}\big(\mu_t + \alpha,\Sigma_t + R\big)$
            \IF{given observation $\hat{Y}_t$}
                \STATE $\mu,\Sigma \leftarrow \mathbf{Filter}(\mu_t,\Sigma_t,R,\hat{Y}_t)$
            \ENDIF
        \ENDFOR
    \ENDFOR
\end{algorithmic}
\end{algorithm}
\vspace{-0.4cm}
\end{minipage}
\vspace{-0.35cm}
\end{wrapfigure}
The latent space dimension $n$ and the model-update frequency $\Delta t$ are determined as hyperparameters. Then, we use the standard Adam optimizer~\citep{adam} to optimize the parameters with respect to the loss $NLL(j) = -\log P(Y(t_j)|\mu(t_j),\Sigma(t_j))$ (where $\mu,\Sigma$ are predicted by NESDE sequentially from $u,\hat{Y}$). Each training iteration corresponds to a batch of sequences of data, where the $NLL$ is aggregated over all the samples of the sequences. Note that our supervision for the training is limited to the times of the observations, even if we wish to make more frequent predictions in inference.

As demonstrated below, the unique architecture of NESDE provides effective regularization and data efficiency (due to piecewise linearity), along with rich expressiveness (neural updates with controlled frequency). Yet, it is important to note that the piecewise linear SDE operator does limit the expressiveness of the model (e.g., in comparison to other neural-ODE models). Further, NESDE is only optimal under a restrictive set of assumptions, as specified in \cref{prop:optimality}.

\section{Synthetic Data Experiments}
\label{sec:synthetic_experiments}
\begin{table*}[t]
\centering
\begin{tabular}{|l|cc|cc|}
\hline
\multirow{2}{*}{\textbf{Model}} & \multicolumn{2}{c|}{\textbf{Complex dynamics eigenvalues}}                             & \multicolumn{2}{c|}{\textbf{Real dynamics eigenvalues}}                                \\ \cline{2-5} 
                                & \multicolumn{1}{c|}{\textbf{MSE}} & \textbf{OOD MSE} & \multicolumn{1}{c|}{\textbf{MSE}} & \textbf{OOD MSE} \\ \hline
\textbf{NESDE (ours)}                  & \multicolumn{1}{c|}{$\mathbf{0.176\pm 0.0001}$}              & $\mathbf{0.178\pm 0.001}$                  & \multicolumn{1}{c|}{$0.222\pm 0.0005$}              & $\mathbf{0.332\pm 0.005}$                  \\ \cline{1-1}
\textbf{GRU-ODE-Bayes}          & \multicolumn{1}{c|}{$0.182\pm 0.0004$}              & $0.361\pm 0.044$                  & \multicolumn{1}{c|}{$\mathbf{0.219\pm 0.0004}$}              & $0.355\pm 0.005$                  \\ \cline{1-1}
\textbf{CRU}                    & \multicolumn{1}{c|}{$0.233\pm 0.0054$}              & $0.584\pm 0.009$                  & \multicolumn{1}{c|}{$0.231\pm 0.001$}              & $0.541\pm 0.026$                    \\ \cline{1-1}
\textbf{LSTM}                   & \multicolumn{1}{c|}{$0.23\pm 0.001$}              & $0.589\pm 0.02$                   & \multicolumn{1}{c|}{$0.381\pm 0.002$}              & $2.354\pm 0.84$                  \\ \hline
\end{tabular}
\caption{\footnotesize Test errors in the irregular synthetic benchmarks, estimated over 5 seeds and 1000 test trajectories per seed, with standard deviation calculated across seeds.}
\label{res:irrg}
\end{table*}

In this section, we test three main aspects of NESDE: (1) prediction from partial and irregular observations, (2) robustness to out-of-distribution control (OOD), and (3) sample efficiency. We experiment with data of a simulated stochastic process, designed to mimic partially observable medical processes with indirect control.

The simulated data includes trajectories of a 1-dimensional signal $Y$, with noiseless measurements at random irregular times. The goal is to predict the future values of $Y$ given its past observations. However, $Y$ is mixed with a latent (unobservable) variable, and they follow linear dynamics with both decay and periodicity (i.e., complex dynamics eigenvalues). In addition, we observe a control signal that affects the latent variable (hence affects $Y$, but only indirectly through the dynamics). The control is simulated as $u_t = b_t-0.5\cdot Y_t$, where $b_t\sim U[0,0.5]$ is a piecewise constant additive noise (changing 10 times per trajectory). Notice that the control $u$ is negatively-correlated with the variable of interest $Y$.

As baselines for comparison, we choose recent ODE-based methods that provide Bayesian uncertainty estimation: \textbf{GRU-ODE}-Bayes~\citep{gru_ode} and \textbf{CRU}~\citep{schirmer2022modeling}. In these methods, concatenating the control signal naively to the observation results in poor learning, as the control becomes part of the model output and dominates the loss function. To enable effective learning for the baselines, we mask-out the control from the loss. As an additional recurrent neural network baseline, we design a dedicated \textbf{LSTM} model that supports irregular predictions, as described in \cref{sec:lstm_medical}.

\textbf{Out-of-distribution control (OOD):}
We simulate two benchmarks -- one with \textit{complex} eigenvalues and another with \textit{real} eigenvalues (no periodicity). We train all models on a dataset of 1000 random trajectories, and test on a separate dataset -- with different trajectories that follow the \textit{same distribution}. In addition, we use an \textit{OOD} test dataset, where the control is modified to correlate positively with the observations: $u_t = b_t+0.5\cdot Y_t$. This can simulate, for example, forecasting of the same biochemical process after changing the medicine dosage policy.

\cref{res:irrg} and \cref{fig:res_LSTM_uni} summarize the prediction errors. Before changing the control policy, NESDE achieves the best accuracy in the complex dynamics, and is on par with GRU-ODE-Bayes in the real dynamics. Notice that CRU, which relies on a real-valued linear model in latent space, is indeed particularly sub-optimal under the complex dynamics, compared to NESDE and GRU-ODE-Bayes. The LSTM presents high errors in both benchmarks.

Once the control changes, all models naturally deteriorate. Yet, NESDE presents the smallest deterioration and best accuracy in the OOD test datasets -- for both complex and real dynamics. In particular, NESDE provides a high prediction accuracy after mere 2 observations (\cref{fig:res_LSTM_ood}), making it a useful zero-shot model. The robustness to the modified control policy can be attributed to the model of NESDE in \cref{eq:NESDE_model}, which decouples the control from the observations.

\begin{figure*}
\centering

\begin{minipage}[t]{0.495\linewidth}
\begin{subfigure}{0.494\linewidth}
  \centering
  \includegraphics[width=1\linewidth]{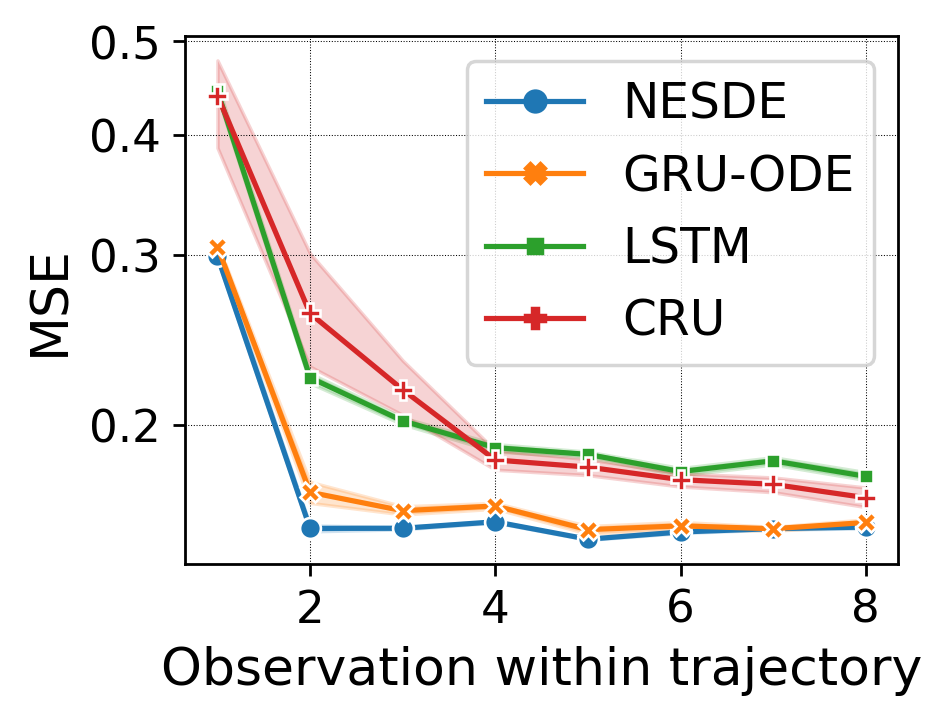}
  \caption{Same control distribution}
  \label{fig:res_LSTM_uni}
\end{subfigure}
\begin{subfigure}{.494\linewidth}
  \centering
  \includegraphics[width=1\linewidth]{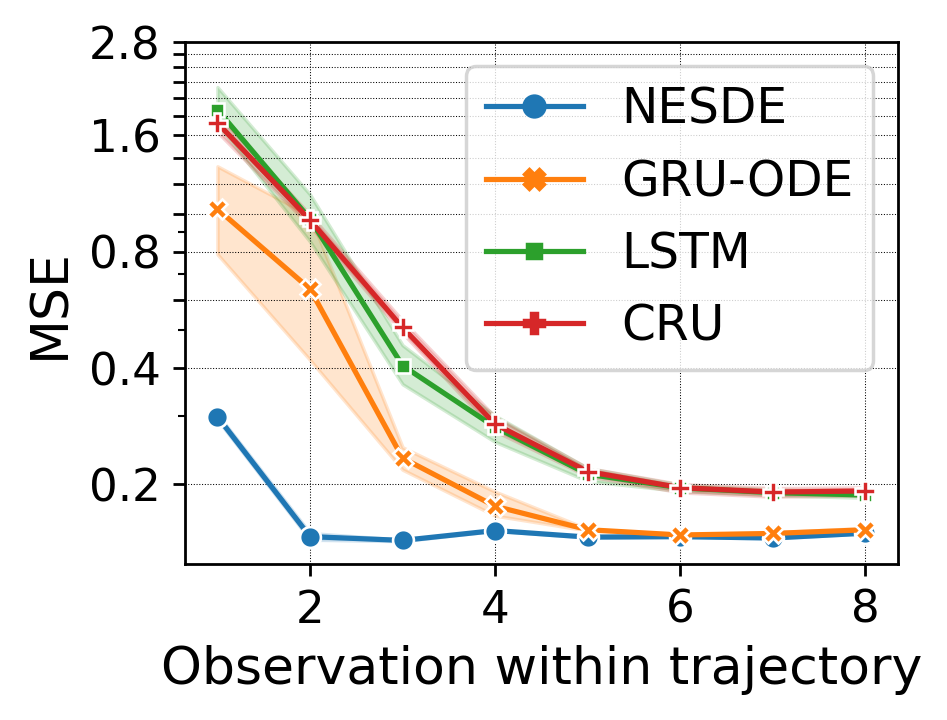}
  \caption{Out of distribution control}
  \label{fig:res_LSTM_ood}
\end{subfigure}
\vspace{-0.3cm}
\caption{\footnotesize MSE vs.~number of measurements observed so far in the trajectory, in the complex dynamics setting, for: (a) standard test set, and (b) test set with out-of-distribution control policy.
95\% confidence intervals are calculated over 5 seeds.}
\end{minipage} \hfill%
% \end{wrapfigure}
% \begin{wrapfigure}{r}{0.45\textwidth}
\begin{minipage}[t]{0.487\linewidth}
\centering
\begin{subfigure}{0.494\linewidth}
  \centering
  \includegraphics[width=1\linewidth]{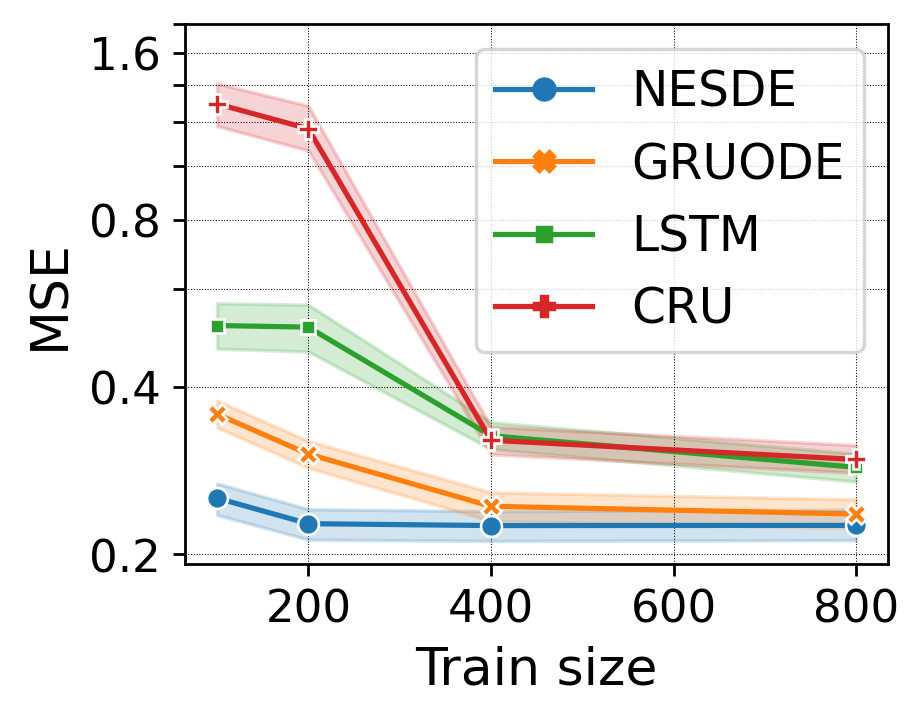}
  \caption{Complex dynamics}
  \label{fig:eff_comp}
\end{subfigure}
\begin{subfigure}{0.494\linewidth}
  \centering
  \includegraphics[width=1\linewidth]{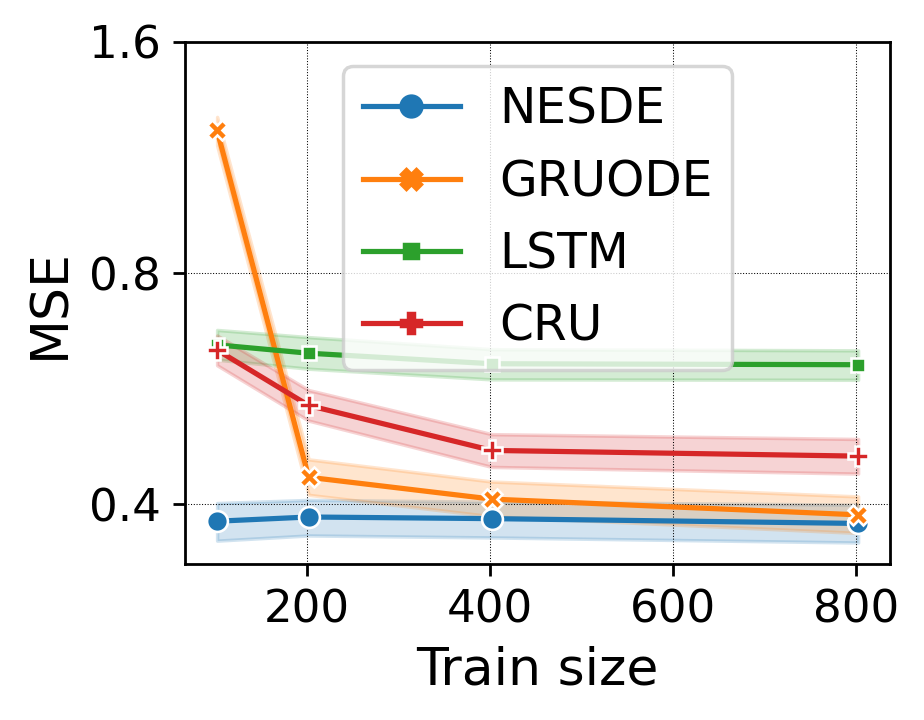}
  \caption{Real dynamics}
  \label{fig:eff_real}
\end{subfigure}
\vspace{-0.3cm}
\caption{\footnotesize Test MSE vs.~train data size. 95\% confidence intervals are calculated over 1000 test trajectories.}
\label{fig:eff_main}
\end{minipage}
\end{figure*}

In a similar setting in \cref{sec:oracle_ood}, the control $u$ used in the training data has continuous knowledge of $Y$. Since the model only observes $Y$ in a limited frequency, $u$ carries additional information about $Y$. This results in extreme overfitting and poor generalization to different control policies -- for all methods except for NESDE, which maintains robust OOD predictions in this challenging setting.

\textbf{Sample efficiency:}
To test sample efficiency, we train each method over datasets with different number of trajectories. Each model is trained on each dataset separately until convergence. As shown in \cref{fig:eff_main}, NESDE achieves the best test accuracy for every training dataset, and learns reliably even from as few as 100 trajectories. The other methods deteriorate significantly in the smaller datasets. Note that in the real dynamics, LSTM fails regardless of the amount of data, as also reflected in \cref{res:irrg}.

GRU-ODE-Bayes achieves the best sample efficiency among the baselines. In \cref{sec:gru_experiments}, we use \textbf{a benchmark from the study of GRU-ODE-Bayes itself} \citep{gru_ode}, and demonstrate the superior sample efficiency of NESDE in that benchmark as well. \cref{sec:sparsity} extends the notion of sample efficiency to sparse trajectories: for a constant number of training trajectories, it reduces the number of observations per trajectory. NESDE demonstrates high robustness to the amount of data in that setting as well.

\textbf{Regular LSTM:}
\cref{app:regular_lstm} extends the experiments for regular data with constant time-steps. In the regular setting, LSTM provides competitive accuracy when observations are dense. However, LSTM fails if the signal is only observed once in multiple time-steps, possibly because gradients have to be propagated over many steps. Hence, even in regular settings, LSTM struggles to provide predictions more frequent than the measurements.

\section{Medication Dosing Regimes}
\label{sec:real_experiments}

\begin{table*}[t]
\centering
\begin{tabular}{|l|cc|cc|}
\hline
\multirow{2}{*}{\textbf{Model}} & \multicolumn{2}{c|}{\textbf{UH Dosing}}                  & \multicolumn{2}{c|}{\textbf{Vancomycin Dosing}}        \\ \cline{2-5} 
                                & \multicolumn{1}{c|}{\textbf{MSE}}      & \textbf{NLL}    & \multicolumn{1}{c|}{\textbf{MSE}}    & \textbf{NLL}    \\ \hline
\textbf{NESDE (ours)}                  & \multicolumn{1}{c|}{$\mathbf{411.2 \pm 7.39}$} & $\mathbf{4.43\pm 0.01}$  & \multicolumn{1}{c|}{$\mathbf{70.71\pm 12.3}$} & $\mathbf{3.69\pm 0.13}$  \\ \cline{1-1}
\textbf{GRU-ODE-Bayes}          & \multicolumn{1}{c|}{$491 \pm 6.88$}    & $4.52\pm 0.008$ & \multicolumn{1}{c|}{$80.54\pm 11.8$} & $6.379\pm 0.12$ \\ \cline{1-1}
\textbf{CRU}                    & \multicolumn{1}{c|}{$450.4 \pm 8.27$}  & $4.49\pm 0.012$ & \multicolumn{1}{c|}{$76.4\pm 12.8$}  & $3.87\pm 0.2$   \\ \cline{1-1}
\textbf{LSTM}                   & \multicolumn{1}{c|}{$482.1 \pm 6.52$}  & $-$             & \multicolumn{1}{c|}{$92.89\pm 11.3$} & $-$             \\ \cline{1-1}
\textbf{Naive}                  & \multicolumn{1}{c|}{$613.3\pm 13.48$}  & $-$             & \multicolumn{1}{c|}{$112.2\pm 16.4$} & $-$             \\ \hline
\end{tabular}
\caption{Test mean square errors (MSE) and negative log-likelihood (NLL, for models that provide probabilistic prediction) in the medication-dosing benchmarks.}
\label{table:med}
\end{table*}

As discussed in \cref{sec:intro}, many medical applications could potentially benefit from ODE-based methods. Specifically, we address medication dosing problems, where observations are often sparse, the dosing is a control signal, and uncertainty estimation is crucial. We test \textbf{NESDE} on two such domains, against the same baselines as in \cref{sec:synthetic_experiments} (\textbf{GRU-ODE}-Bayes, \textbf{CRU} and an irregular \textbf{LSTM}). We also add a \textbf{naive} model with ``no-dynamics" (whose prediction is identical to the last observation).

The benchmarks in this section were derived from the MIMIC-IV dataset~\citep{Johnson_2021}. Typically to electronic health records, the dataset contains a vast amount of side-information (e.g., weight and heart rate). We use some of this information as an additional input -- for each model according to its structure (context-features for the hyper-network of NESDE, covariates for GRU-ODE-Bayes, state variables for CRU, and embedding units for the LSTM). Some context features correspond to online measurements and are occasionally updated. In both domains, we constraint the process eigenvalues $\lambda$ to be negative, to reflect the stability of the biophysical processes. Indeed, the spectral representation of NESDE provides us with a natural way to incorporate such domain knowledge, which often cannot be used otherwise.
For all models, in both domains, we use a 60-10-30 train-validation-test data partition.
See more implementation details in \cref{sec:preprocessing_medical}.

\subsection{Unfractionated Heparin Dosing}
\label{sec:hep_dosing}
Unfractionated Heparin (UH) is a widely used anticoagulant drug. It may be given in a continuous infusion to patients with life-threatening clots and works by interfering with the normal coagulation cascade. As the effect is not easily predicted, the drug's actual activity on coagulation is traditionally monitored using a lab test performed on a blood sample from the patient: activated Partial Thromboplastin Time (aPTT) test. The clinical objective is to keep the aPTT value in a certain range. The problem poses several challenges: different patients are affected differently; the aPTT test results are delayed; monitoring and control are required in higher frequency than measurements; and deviations of the aPTT from the objective range may be fatal. In particular, underdosed UH may cause clot formation and overdosed UH may cause an internal bleeding~\citep{landefeld1987identification}. Dosage rates are typically decided by a physician, manually per patient, using simple protocols and trial-and-error over significant amounts of time. Here we focus on continuous prediction as a key component for aPTT control.

Following the preprocessing described in \cref{sec:preprocessing_medical}, the MIMIC-IV dataset derives $5866$ trajectories of a continuous UH control signal, an irregularly-observed aPTT signal (whose prediction is the goal), and $42$ context features. It is known that UH does not affect the coagulation time (aPTT) directly (but only through other unobserved processes, \citet{Delavenne2017}); thus, we mask the control mapping $B$ to have no direct effect on the aPTT metric, but only on the latent variable (which can be interpreted as the body UH level). The control (UH) and observations (aPTT) are one-dimensional ($m=1$), and we set the whole state dimension to $n=4$.

\subsection{Vancomycin Dosing}
\label{sec:vanco_dosing}

\begin{wrapfigure}{l}{0.54\textwidth}
\vspace{-0.3cm}
\centering
\begin{subfigure}{.5\linewidth}
  \centering
  \includegraphics[width=1\linewidth]{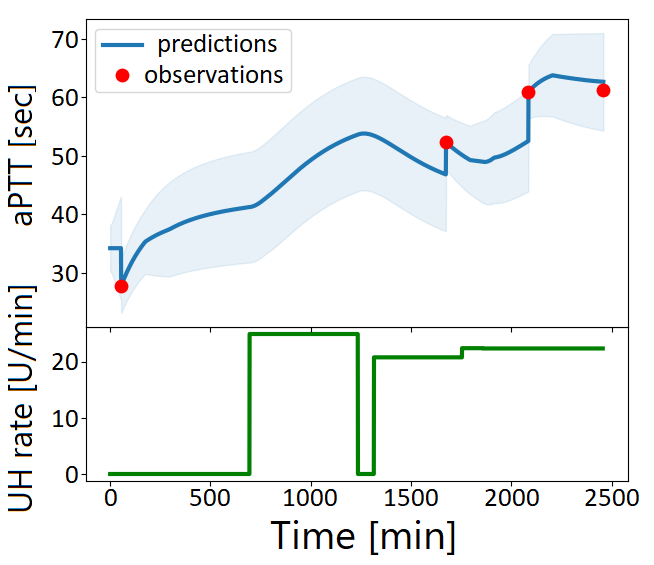}
  \caption{Heparin}
\end{subfigure}
\begin{subfigure}{.483\linewidth}
  \centering
  \includegraphics[width=1\linewidth]{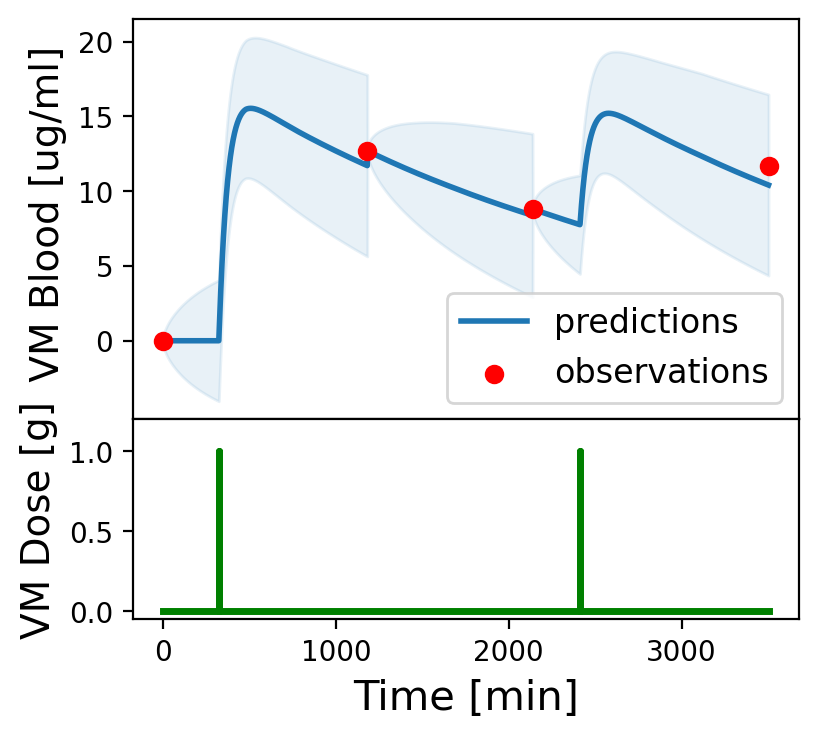}
  \caption{Vancomycin}
\end{subfigure}
\vspace{-0.3cm}
\caption{\footnotesize A sample of patients from (a) the UH dosing dataset, and (b) the VM dosing dataset. The lower plots correspond to medication dosage (UH in (a) and VM in (b)). The upper plots correspond to the continuous prediction of NESDE (aPTT levels in (a) and VM concentration in (b)), with 95\% confidence intervals. In both settings, the prediction at every point relies on all the observations up to that point.}
\label{fig:traj_patients}
\vspace{-0.3cm}
\end{wrapfigure}

Vancomycin (VM) is an antibiotic that has been in use for several decades. However, the methodology of dosing VM remains a subject of debate in the medical community \citep{10.2146/ajhp080434}, and there is a significant degree of variability in VM dynamics among patients \citep{marsot2012vancomycin}. The dosage of VM is critical; it could become toxic if overdosed \citep{filippone2017nephrotoxicity}, and ineffective as an antibiotic if underdosed. The concentration of VM in the blood can be measured through lab test, but these tests are often infrequent and irregular, which fits into our problem setting. 

Here, the goal is to predict the VM concentration in the blood at any given time, where the dosage and other patient measurements are known. Following the preprocessing described in \cref{sec:preprocessing_medical}, the dataset derives $3564$ trajectories of VM dosages at discrete times, blood concentration of VM ($m=1$) at irregular times, and similarly to \cref{sec:hep_dosing}, $42$ context features. This problem is less noisy than the UH dosing problem, as the task is to learn the direct dynamics of the VM concentration, and not the effects of the antibiotics. The whole state dimension is set to $n=2$, and we also mask the control mapping $B$ to have no direct effect on the VM concentration, where the latent variable that directly affected could be viewed as the amount of drug within the \textit{whole body}, which in turn affects the actual VM concentration in the \textit{blood}.

\subsection{Results}
\label{sec:res_med}

\begin{wrapfigure}{r}{0.5\textwidth}
    \centering
    \includegraphics[width=.75\linewidth]{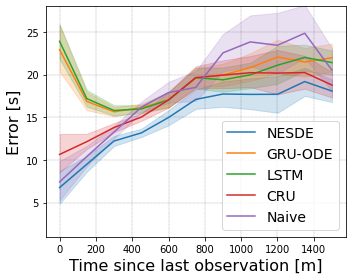}
    \vspace{-0.3cm}
    \caption{\footnotesize The aPTT prediction errors of every model in the UH problem, vs.~the time passed since the last aPTT test.}
    \label{fig:res_UH}
    \vspace{-0.3cm}
\end{wrapfigure}

\cref{fig:traj_patients} displays sample trajectories predicted by NESDE in both domains. As summarized in \cref{table:med}, NESDE outperforms the other baselines in both UH and VM dosing tasks, in terms of both square errors (MSE) and likelihood (NLL). For the UH dosing problem, we also analyze the errors of the models with respect to the time from the last observation in \cref{fig:res_UH}. Evidently, NESDE provides the most accurate predictions in all the horizons. This analysis demonstrates the difficulty of recursive models when control inputs are dense w.r.t.~measurements. Both the LSTM and GRU-ODE-Bayes have a difficulty to update smoothly, resulting in poor accuracy for the first hours after a measurement. CRU seems to provide smoother predictions in the UH dosing experiment. NESDE avoids recursive updates for the control input, and uses analytic solution instead, providing smooth and stable predictions.

Despite the large range of aPTT levels in the data (e.g., the top 5\% are above $100s$ and the bottom 5\% are below $25s$), 50\% of all the predictions have errors lower than $12.4s$ -- an accuracy level that is considered clinically safe. Figure~\ref{fig:res_UH} shows that indeed, if at least 3 measurements were already observed, and up to 4 hours passed since the last lab test, then the average error is smaller than $10s$.

\section{Conclusion}
\label{sec:summary}

Motivated by medical forecasting and control problems, we characterized a set of challenges in continuous sequential prediction: sample efficiency, uncertainty estimation, personalized modeling, continuous inference and generalization to different control policies. To address these challenges, we introduced the novel NESDE algorithm, based on a stochastic differential equation with spectral representation. We demonstrated the reliability of NESDE in a variety of synthetic and real data experiments, including high noise, little training data, contextual side-information and out-of-distribution control signals. In addition, NESDE demonstrated high prediction accuracy after as few as 2 observations, making it a useful zero-shot model.

We applied NESDE to two real-life high-noise medical problems with sparse and irregular measurements: (1) blood coagulation forecasting in Heparin-treated patients, and (2) Vancomycin levels prediction in patients treated by antibiotics. In both problems, NESDE significantly improved the prediction accuracy compared to alternative methods.

As demonstrated in the experiments, NESDE provides robust, reliable and uncertainty-aware continuous \textit{forecasting}. This paves the way to development of \textit{decision making} in continuous high-noise decision processes, including medical treatment, finance and operations management. Future research may address medical optimization via both control policies (e.g., to control medication dosing) and sampling policies (to control measurements timing, e.g., of blood tests).

\newpage
% \subsubsection*{Acknowledgements}

\bibliographystyle{plainnat}
% \nocite{*}
\bibliography{main}

%%%%%%%%%%%%%%%%%%%%%%%%%%%%%%%%%%%%%%%%%%%%%%%%%
%%%%%%%%%%%%%%%%%%%%%%%%%%%%%%%%%%%%%%%%%%%%%%%%%

\newpage

\onecolumn
\appendix
\newpage
\addcontentsline{toc}{section}{Appendices} % Add the appendix text to the document TOC
\part{Appendices} % Start the appendix part
\parttoc % Insert the appendix TOC
\newpage

\section{Observation Filtering: The Conditional Distribution and the Relation to Kalman Filtering}
\label{sec:observation_filter}

As described in \cref{sec:NESDE}, the NESDE algorithm keeps an estimated Normal distribution of the system state $X(t)$ at any point of time. The distribution develops continuously through time according to the dynamics specified by \cref{eq:NESDE_model}, except for the discrete times where an observation $\hat{Y}(t)$ is received: in every such point of time, the $X(t)$ estimate is updated to be the conditional distribution $X(t)|\hat{Y}(t)$.

\textbf{Calculating the conditional Normal distribution:}
The conditional distribution can be derived as follows. Recall that $X\sim N(\mu,\Sigma)$ (we remove the time index $t$ as we focus now on filtering at a single point of time). Denote $X=(Y,Z)^\top$ where $Y\in\mathbb{R}^m$ (similarly to \cref{eq:model}) and $Z\in\mathbb{R}^{n-m}$; and similarly, $\mu=(\mu_Y,\mu_Z)^\top$ and
$$ 
\Sigma = \begin{pmatrix} \Sigma_{YY} & \Sigma_{YZ} \\ \Sigma_{ZY} & \Sigma_{ZZ} \end{pmatrix} 
$$
First consider a noiseless observation ($R=0$): then according to \citet{conditional_dist}, the conditional distribution $X|Y=\hat{Y}$ is given by $X=(Y,Z)^\top$, $Y=\hat{Y}$ and $Z\sim N(\mu_{Z}^\prime,\Sigma_{ZZ}^\prime)$, where
$$
\mu_Z^\prime \coloneqq \mu_{Z} + \Sigma_{ZY}\Sigma_{YY}^{-1}(\hat{Y}-\mu_Y)
$$
$$
\Sigma_{ZZ}^\prime \coloneqq \Sigma_{ZZ} - \Sigma_{ZY}\Sigma_{YY}^{-1}\Sigma_{YZ}
$$

In the general case of $R\ne0$, we can redefine the state to include the observation explicitly: $\tilde{X}=(\hat{Y},X)^\top=(\hat{Y},Y,Z)^\top$, where $\tilde{\mu},\tilde{\Sigma}$ of $\tilde{X}$ are adjusted by $\mu_{\hat{Y}}=\mu_y$, $\Sigma_{\hat{Y}\hat{Y}}=\Sigma_{YY}+R$, $\Sigma_{\hat{Y}Y}=R$ and $\Sigma_{\hat{Y}Z}=\Sigma_{YZ}$. Then, the conditional distribution can be derived as in the noiseless case above, by simply considering the new observation as a noiseless observation of $\tilde{X}_{1:m}=\hat{Y}$.

\textbf{The relation to the Kalman filtering:}
The derivation of the conditional distribution is equivalent to the filtering step of the Kalman filter~\citep{KF}, where the (discrete) model is
\begin{align*}
\label{eq:KF_model}
\begin{split}
    X_{t+1} &= A\cdot X_t + \omega_t \qquad (\omega_t\sim N(0,Q)) \\
    \hat{Y}_{t} &= H\cdot X_t + \nu_t \qquad (\nu_t\sim N(0,R)) ,
\end{split}
\end{align*}
Our setup can be recovered by substituting the following observation model $H\in\mathbb{R}^{m \times n}$, which observes the first $m$ coordinates of $X$ and ignores the rest:
$$
H = \begin{pmatrix} 1 &&  && & 0 & ... & 0 \\ & 1 &&  &&& \\ && ... &&&|&|&|  \\&&& 1 &&&\\&&&& 1 & 0 & ... & 0 \\ \end{pmatrix}
$$
and the Kalman filtering step is then
$$
K \coloneqq \Sigma H^\top (H\Sigma H^\top + R)^{-1}
$$
$$
\mu^\prime \coloneqq \mu + K(\hat{Y}-H\mu)
$$
$$
\Sigma^\prime \coloneqq \Sigma - KH\Sigma
$$
Note that while the standard Kalman filter framework indeed supports the filtering of distributions upon arrival of a new observation, its progress through time is limited to discrete and constant time-steps (see the model above), whereas our SDE-based model can directly make predictions to any arbitrary future time $t$.

\section{Integrator Implementation}
\label{sec:integrator}
Below, we describe the implementation of the integrator of the Eigen-SDE solver mentioned in \cref{sec:NESDE}.

\textbf{Numerical integration given $u(t)$:}
In the present of an arbitrary (continuous) control signal $u(t)$, it is impossible to compute the integral that corresponds with $u(t)$ (\cref{eq:sde_sol}) analytically. On the other hand, $u(t)$ is given in advance, and the eigenfunction, $\Phi(t)$, is a known function that can be calculated efficiently at any given time. By discretizing the time to any fixed $\Delta t$, one could simply replace the integral by a sum term
$$
\int_{t_0}^{t}{\Phi(\tau)^{-1}u(\tau)d\tau} \approx \sum_{i=0}^{\frac{t-t_0}{\Delta t}}{\Phi(t_0 + i\cdot\Delta t)u(t_0 + i\cdot\Delta t)\Delta t}
$$
while this sum represent $\frac{t-t_0}{\Delta t}$ calculations, it can be computed efficiently, as it does not require any recursive computation, as both $Phi(t)$ and $u(t)$ are pre-determined, known functions. Each element of the sum is independent of the other elements, and thus the computation could be parallelized.

\textbf{Analytic integration:}
The control $u$ is often constant over any single time-interval $\mathcal{I}$ (e.g., when the control is piecewise constant). In such cases, for a given interval $\mathcal{I}=[t_0,t]$ in which $u(t)=u_{\mathcal{I}}$, the integral could be solved analytically:
$$
\int_{t_0}^{t}{\Phi(\tau)^{-1}u(\tau)d\tau} = \int_{t_0}^{t}{e^{- \Lambda \tau}V^{-1}u_{\mathcal{I}} d\tau} = \int_{t_0}^{t}{e^{- \Lambda \tau}d\tau V^{-1}u_{\mathcal{I}}} = \frac{1}{\Lambda}\big(e^{- \Lambda t_0} - e^{- \Lambda t}\big)V^{-1}u_{\mathcal{I}}
$$
one might notice that for large time intervals this form is numerically unstable, to address this issue, note that this integral is multiplied (\cref{eq:sde_sol}) by $\Phi(t)=Ve^{\Lambda t}$, hence we stabilize the solution with the latter exponent:
$$
\Phi(t)\frac{1}{\Lambda}\big(e^{- \Lambda t_0} - e^{- \Lambda t}\big)V^{-1}u_{\mathcal{I}} = V\frac{1}{\Lambda}\big(e^{\Lambda(t - t_0)} - e^{\Lambda( t- t)}\big)V^{-1}u_{\mathcal{I}} = V\frac{1}{\Lambda}\big(e^{\Lambda(t - t_0)} - 1\big)V^{-1}u_{\mathcal{I}}
$$
to achieve a numerically stable computation.

In addition to the integral over $u(t)$, we also need to calculate the integral over $Q$ (\cref{eq:normal_sol}). In this case, $Q$ is constant, and the following holds;
$$
\int_{t_0}^t \Phi(\tau)^{-1}Q (\Phi(\tau)^{-1})^\top d\tau = \int_{t_0}^t e^{-\Lambda \tau}V^{-1}Q (V^{-1})^\top (e^{-\Lambda \tau})^\top d\tau = V^{-1}Q (V^{-1})^\top \circ \int_{t_0}^t e^{-\tilde{\Lambda} \tau} d\tau
$$
where $\circ$ denotes the Hadamard product, and 
$$
\tilde{\Lambda} = \begin{pmatrix} 2\lambda_1 & \cdots & \lambda_1 + \lambda_n \\ \vdots & \ddots \\ \lambda_n + \lambda_1 & \cdots & 2\lambda_n \end{pmatrix}
$$
In this form, it is possible to solve the integral analytically, similarly to the integral of the control signal, and again, we use the exponent term from $\Phi(t)$ to obtain a numerically stable computation.

\section{The Dynamics Spectrum and Complex Eigenfunction Implementation}
\label{sec:complex_eigens}
The form of the eigenfunction matrix as presented in \cref{sec:preliminaries} is valid for real eigenvalues. Complex eigenvalues induce a slightly different form; firstly, they come in pairs, i.e., if $z=a+bi$ is an eigenvalue of $A$ (\cref{eq:lin_sde}), then $\bar{z}=a-bi$ (the complex conjugate of $z$) is an eigenvalue of $A$. The corresponding eigenvector of $z$ is complex as well, denote it by $v=v_{real} + v_{im}i$, then $\bar{v}$ (the complex conjugate of $v$) is the eigenvector that correspond to $\bar{z}$. Secondly, the eigenfunction matrix takes the form:
$$
\Phi(t)=e^{a t}\begin{pmatrix} | & | \\ v_{real}\cdot cos(b t) - v_{im}\cdot sin(b t) & v_{im}\cdot cos(b t) + v_{real}\cdot sin(b t) \\ | & | \end{pmatrix}
$$
For brevity, we consider only the elements that correspond with $z$, $\bar{z}$. To parametrize this form, we use the same number of parameters (each complex number need two parameters to represent, but since they come in pairs with their conjugates we get the same overall number) which are organized differently. Mixed eigenvalues (e.g., both real and complex) induce a mixed eigenfunction that is a concatenation of the two forms. Since the complex case requires a different computation, we leave the number of complex eigenvalues to be a hyperparameter. Same as for the \textit{real} eigenvalues setting, it is possible to derive an analytical computation for the integrals. Here, it takes a different form, as the complex eigenvalues introduce trigonometric functions to the eigenfunction matrix. To describe the analytical computation, first notice that:
$$
\Phi(t) = e^{at} \begin{pmatrix} | & | \\ v_{real} & v_{im} \\ | & |
\end{pmatrix} \begin{pmatrix} cos(bt) & sin(bt) \\ -sin(bt) & cos(bt) \end{pmatrix}
$$
and thus:
$$
\Phi(t)^{-1} = e^{-at} \begin{pmatrix} cos(bt) & -sin(bt) \\ sin(bt) & cos(bt) \end{pmatrix} \begin{pmatrix} | & | \\ v_{real} & v_{im} \\ | & |
\end{pmatrix}^{-1}
$$
Note that here we consider a two-dimensional SDE, for the general case the trigonometric matrix is a block-diagonal matrix, and the exponent becomes a diagonal matrix in which each element repeats twice. It is clear that similarly to the real eigenvalues case, the integral term that includes $u$ (as shown above) can be decomposed, and it is possible to derive an analytical solution for an exponent multiplied by sine or cosine. One major difference is that here we use matrix product instead of Hadamard product. The integral over $Q$ becomes more tricky, but it can be separated and computed as well, with the assistance of basic linear algebra (both are implemented in our code).

\section{Solver Analysis}
\label{sec:solver}
Below is a more complete version of \cref{prop:optimality} and its proof.

\begin{proposition}[Eigen-SDE solver optimality: complete formulation]
Let $X(t)$ be a signal that follows \cref{eq:NESDE_model} for any time interval $\mathcal{I}_i=\left[t_i,t_{i+1}\right]$, and $u(t)$ a control signal that is constant over $\mathcal{I}_i$ for any $i$. For any $i$, consider the Eigen-SDE solver with the parameters corresponding to \cref{eq:NESDE_model} (for the same $\mathcal{I}_i$). Assume that the first solver ($i=0$) is initialized with the true initial distribution $X(0)\sim N(\mu_0,\Sigma_0)$, and for $i\ge1$, the $i$'th solver is initialized with the $i-1$'th output, along with an observation filter if an observation was received. For any interval $i$ and any time $t\in \mathcal{I}_i$, consider the prediction $\tilde{X}(t)\sim N(\mu(t),\Sigma(t))$ of the solver. Then, $\mu(t)$ minimizes the expected square error of the signal $X(t)$, and $\tilde{X}(t)$ maximizes the expected log-likelihood of $X(t)$.
\end{proposition}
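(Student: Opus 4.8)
The plan is to reduce both optimality claims to a single structural fact: that the solver's predicted law $N(\mu(t),\Sigma(t))$ coincides \emph{exactly} with the true conditional law of $X(t)$ given all observations received up to time $t$. Once this distributional identity is in hand, the two optimality statements follow from standard estimation-theoretic facts—the posterior mean is the minimum-mean-square-error estimator, and the true posterior is the unique maximizer of expected log-likelihood—so essentially no new computation is required beyond what the earlier sections already supply.

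First I would establish the distributional identity by induction on the interval index $i$. The base case $i=0$ holds by hypothesis, since the first solver is initialized with the true initial law $N(\mu_0,\Sigma_0)$. For the inductive step I would invoke two exactness results already available. (a) \emph{Within an interval}: assuming the law at $t_i$ is the true conditional law, \cref{eq:normal_sol} shows that the propagated law $N(\mu(t),\Sigma(t))$ for $t\in\mathcal{I}_i$ is exactly the true conditional law of $X(t)$, \emph{provided} the integrals in \cref{eq:normal_sol} are evaluated exactly; the constant-$u$ hypothesis guarantees this, since \cref{sec:integrator} gives closed-form (hence exact) evaluations of both the control integral and the $Q$-integral. (b) \emph{At an observation time}: when $\hat{Y}(t_{i+1})$ arrives, the Gaussian conditioning formulas of \cref{sec:observation_filter} return precisely the conditional law $X(t_{i+1})\mid\hat{Y}(t_{i+1})$; because the prior entering the filter is the correct conditional law (by (a)) and the observation noise is Gaussian, the filtered law is again the correct conditional law, which is exactly the correct initialization for interval $i+1$. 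Chaining (a) and (b) closes the induction.

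Given the identity, the two claims are immediate. For the mean, let $O_t$ denote the observation history up to time $t$; among all measurable functions of $O_t$, the minimizer of $E[\|X(t)-\hat{x}(O_t)\|^2]$ is the conditional mean $E[X(t)\mid O_t]$, which by the identity equals $\mu(t)$, so the solver's point prediction is optimal. For the likelihood, among all predictive densities $q$ that are functions of $O_t$, the expected log-likelihood $E[\log q(X(t))\mid O_t]$ is maximized by the true conditional density $p(\cdot\mid O_t)$ via Gibbs' inequality (nonnegativity of the KL divergence), with equality iff $q=p$; since the solver outputs exactly this $p=N(\mu(t),\Sigma(t))$, it attains the maximal expected log-likelihood.

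The main obstacle is not any single calculation—each ingredient is proved earlier—but the careful bookkeeping of the induction: one must verify that ``the true conditional law given observations so far'' is preserved across \emph{both} the continuous SDE propagation and the discrete filtering jumps, and must pin down that the constant-$u$ assumption is precisely what upgrades the propagation step from approximate to exact. I would also make explicit the ``certain conditions'' of the short \cref{prop:optimality}, namely that the solver parameters match the true generative parameters of \cref{eq:NESDE_model}, that $u$ is piecewise constant on the solver's intervals, and that the initialization is the true prior; these are exactly the hypotheses under which the distributional identity—and hence both optimality statements—holds, and without them the solver only approximates the true posterior.
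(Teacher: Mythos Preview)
Your proposal is correct and follows essentially the same approach as the paper: an induction over intervals showing the solver's Gaussian output coincides with the true (conditional) law of $X(t)$, followed by the standard facts that the posterior mean minimizes MSE and the true posterior maximizes expected log-likelihood. Your write-up is somewhat more explicit than the paper's---you spell out the conditioning on the observation history $O_t$ and invoke Gibbs' inequality for the likelihood claim---but the structure and ingredients are identical.
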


\begin{proof}
We prove by induction over $i$ that for any $i$ and any $t\in\mathcal{I}_i$, $\tilde{X}(t)$ corresponds to the true distribution of the signal $X(t)$.

For $i=0$, $X(t_i)=X(0)$ corresponds to the true initial distribution, and since there are no ``interrupting" observations within $\mathcal{I}_0$, then the solution \cref{eq:sde_sol,eq:normal_sol} of \cref{eq:NESDE_model} corresponds to the true distribution of $X(t)$ for any $t\in\left[t_i,t_{i+1}\right)$. Since $u$ is constant over $\mathcal{I}_0$, then the prediction $\tilde{X}(t)$ of the Eigen-SDE solver follows \cref{eq:normal_sol} accurately using the analytic integration (see \cref{sec:integrator}; note that if $u$ were not constant, the solver would still follow the solution up to a numeric integration error). Regarding $t_1$, according to \cref{sec:observation_filter}, $\tilde{X}(t_1)$ corresponds to the true distribution of $X(t_1)$ \textit{after} conditioning on the observation $\hat{Y}(t_1)$ (if there was an observation at $t_1$; otherwise, no filtering is needed). This completes the induction basis. Using the same arguments, if we assume for an arbitrary $i\ge 0$ that $\tilde{X}(t_i)$ corresponds to the true distribution, then $\tilde{X}(t)$ corresponds to the true distribution for any $t\in\mathcal{I}_i=\left[t_i,t_{i+1}\right]$, completing the induction.

Now, for any $t$, since $\tilde{X}(t)\sim N(\mu(t),\Sigma(t))$ is in fact the true distribution of $X(t)$, the expected square error $E\left[SE(t)\right]=E\left[(\mu-X(t))^2\right]$ is minimized by choosing $\mu\coloneqq \mu(t)$; and the expected log-likelihood $E\left[\ell(t)\right]=E\left[\log P(X(t)|\mu,\Sigma)\right]$ is maximized by $\mu\coloneqq \mu(t),\Sigma\coloneqq \Sigma(t)$.
\end{proof}

\section{Extended Experiments}
\label{app:extended_experiments}

\subsection{Interpretability: Inspecting the Spectrum}
\label{app:interpretability}
In addition to explicit predictions at flexible times, NESDE provides direct estimation of the process dynamics, carrying significant information about the essence of the process.

\begin{figure}[t]
    \centering
    \begin{subfigure}{.32\linewidth}
      \centering
      \includegraphics[width=1\linewidth]{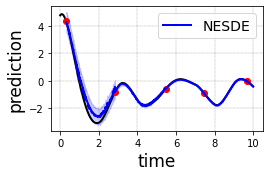}
      \caption{Complex $\lambda$}
      \label{fig:res_lambdas_copmlex_sample}
    \end{subfigure}
    \begin{subfigure}{.32\linewidth}
      \centering
      \includegraphics[width=1\linewidth]{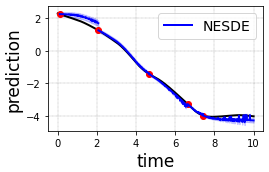}
      \caption{Real $\lambda$}
      \label{fig:res_lambdas_real_sample}
    \end{subfigure}
    \begin{subfigure}{.32\linewidth}
      \centering
      \includegraphics[width=1\linewidth]{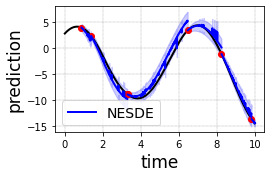}
      \caption{Imaginary $\lambda$}
      \label{fig:res_lambdas_imag_sample}
    \end{subfigure}
    \vspace{-0.3cm}
    \caption{\footnotesize Sample trajectories with different types of dynamics (the control signal is not shown). In addition to the predictions, NESDE directly estimates the dynamics defined by $\lambda$.}
    \label{fig:res_lambdas_sample}
\end{figure}

For example, consider the following 3 processes, each with one observable variable and one latent variable: $A_1=\begin{psmallmatrix}-0.5&-2\\2&-1\end{psmallmatrix}$ with the corresponding eigenvalues $\lambda_1\approx -0.75 \pm 1.98 i$; $A_2=\begin{psmallmatrix}-0.5&-0.5\\-0.5&-1\end{psmallmatrix}$ with $\lambda_2\approx(-1.3,-0.19)^\top$; and $A_3=\begin{psmallmatrix}1&-2\\2&-1\end{psmallmatrix}$ with $\lambda_3\approx\pm 1.71i$. As demonstrated in \cref{fig:res_lambdas_sample}, the three processes have substantially different dynamics: roughly speaking, real negative eigenvalues correspond to decay, whereas imaginary eigenvalues correspond to periodicity.

For each process, we train NESDE over a dataset of 200 trajectories with 5-20 observations each.
We set NESDE to assume an underlying dimension of $n=2$ (i.e., one latent dimension in addition to the $m=1$ observable variable); train it once in real mode (real eigenvalues) and once in complex mode (conjugate pairs of complex eigenvalues); and choose the model with the better NLL over the validation data. Note that instead of training twice, the required expressiveness could be obtained using $n=4$ in complex mode (see \cref{sec:complex_eigens}); however, in this section we keep $n=2$ for the sake of spectrum interpretability.

As the processes have linear dynamics, for each of them NESDE learned to predict a consistent dynamics model: all estimated eigenvalues are similar over different trajectories, with standard deviations smaller than 0.1. The learned eigenvalues for the three processes are $\tilde{\lambda}_1= -0.77 \pm 1.98 i$; $\tilde{\lambda}_2=(-0.7,-0.19)^\top$; and $\tilde{\lambda}_3=-0.03\pm 0.83i$. That is, NESDE recovers the eigenvalues class (complex, real, or imaginary), which captures the essence of the dynamics -- even though it only observes one of the two dimensions of the process. The eigenvalues are not always recovered with high accuracy, possibly due to the latent dimensions making the dynamics formulation ambiguous.

\subsection{Synthetic Data Experiments with Regular Observations}
\label{app:regular_lstm}

While NESDE (and ODE-based models) can provide predictions at any point of time, a vanilla LSTM is limited to the predefined prediction horizon. Shorter horizons provide higher temporal resolution, but this comes with a cost: more recursive computations are needed per time interval, increasing both learning complexity and running time. For example, if medical measurements are available once per hour while predictions are required every 10 seconds, the model would have to run recursively 360 times between consecutive measurements, and would have to be trained accordingly in advance. We use the synthetic data environment from \cref{sec:synthetic_experiments}, in the \textit{complex} dynamics setting, and test both regularly and out-of-distribution control (see \cref{sec:synthetic_experiments}). Here, we use LSTM models trained with resolutions of 1, 8 and 50 predictions per observation. All the LSTM models receive the control $u$ and the current observation $Y$ as an input, along with a boolean $b_o$ specifying observability: in absence of observation, we set $Y=0$ and $b_o=0$. The models consist of a linear layer on top of an LSTM layer, with 32 neurons between the two. To compare LSTMs with various resolutions, we work with regular samples, $10$ samples, one at each second. The control changes in a $10^{-2}$ seconds' resolution, and contains information about the true state.

In \cref{fig:res_LSTM_sample} we present a sample trajectory (without the control signal) with the predictions of the various LSTMs and NESDE. It can be observed that while NESDE provides continuous, smooth predictions, the resolution of the LSTMs must be adapted for a good performance. As shown in \cref{fig:res_LSTM_uni_reg}, all the methods perform well from time $t=3$ and on, still, NESDE and the low-resolution variants of LSTM attain the best results. The poor accuracy of the high-resolution LSTM demonstrates the accuracy-vs-resolution tradeoff in recursive models, moreover, GRUODE shows similar behavior in this analysis, which may hint on the recursive components within GRUODE.

\begin{figure*}[ht]
\centering
\begin{subfigure}{.29\linewidth}
  \centering
  \includegraphics[width=1\linewidth]{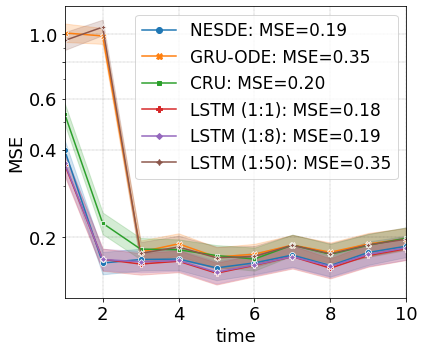}
  \caption{One-step prediction}
  \label{fig:res_LSTM_uni_reg}
\end{subfigure}
\begin{subfigure}{.29\linewidth}
  \centering
  \includegraphics[width=1\linewidth]{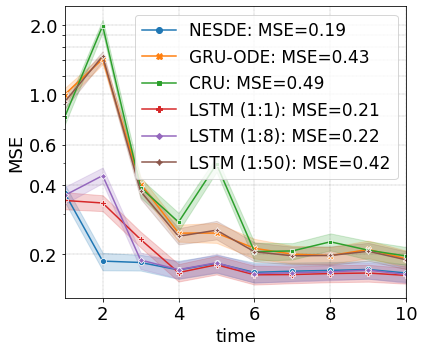}
  \caption{Out of distribution control}
  \label{fig:res_LSTM_ood_reg}
\end{subfigure}
\begin{subfigure}{.40\linewidth}
  \centering
  \includegraphics[width=1\linewidth]{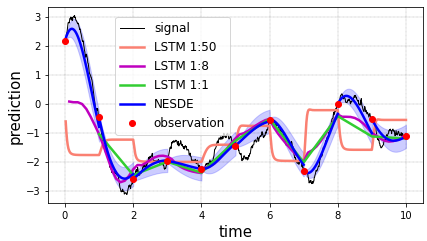}
  \caption{Sample trajectory}
  \label{fig:res_LSTM_sample}
\end{subfigure}
\vspace{-0.3cm}
\caption{\footnotesize MSE for predictions, relying on the whole history of the trajectory for (a) the test set, and (b) out-of-distribution test set. The uncertainty corresponds to 0.95-confidence-intervals over 1000 trajectories. (c) Sample trajectory and predictions. The LSTM predictions are limited to predefined times (e.g., LSTM 1:1 only predicts at observation times), but their predictions are connected by lines for visibility. The shading corresponds to NESDE uncertainty (note that the LSTM does not provide uncertainty estimation).}
\end{figure*}

The out-of-distribution test results (\cref{fig:res_LSTM_ood_reg}) show that a change in the control policy could result with major errors; while NESDE achieves errors which are close to \cref{fig:res_LSTM_uni_reg}, the other methods deteriorate in their performance. Notice the scale difference between the figures. The high-resolution LSTM and the ODE-based methods suffer the most, and the low-resolution variants of the LSTM, demonstrate robustness to the control change. This result is similar to the results we present in \cref{sec:synthetic_experiments}, although here we see similarities between the variants of the LSTM and the ODE-based methods.

\subsection{Model Expressiveness and Overfitting}
\label{sec:oracle_ood}

It is well known that more complex models are capable to find complex connections within the data, but are also more likely to overfit the data. It is quite common that a data that involves control is biased or affected by confounding factors: a pilot may change his course of flight because he saw a storm that was off-the-radar; a physician could adapt his treatment according to some measure that is off-charts. Usually, using enough validation data could solve the overfitting issue, although sometimes the same confounding effects show in the validation data, which results in a model that is overfitted to the dataset. When targeting a model for control adjustment, it is important that it would be robust to changes in the control; a model that performs poorly when facing different control is unusable for control tuning. To exemplify an extreme case of confounding factors in the context of control, we add a correlation between the control (observed at all times) to the predictable measure (observed sparsely), in particular at times that the predictable is unobserved. We harness the same synthetic data benchmark as in \cref{sec:synthetic_experiments}, and use regular time samples, and the same LSTM baselines as in \cref{app:regular_lstm} but here we generate different two types of control signals:
\begin{enumerate}
    \item Same Distribution (SD): at each time $t$, the control $u(t)=b_t - 0.8\cdot Y_t$.
    \item Out of Distribution(OOD): at each time $t$, the control $u(t)=b_t + 0.8\cdot Y_t$.
\end{enumerate}
$b_t$ is a random piecewise constant and $Y_t$ is the exact value of the measure we wish to predict. The first type is used to generate the train and the test sets, additionally we generate an out-of-distribution test-set using the second type. We observe in \cref{fig:oracle} that GRU-ODE-Bayes and the high-resolution LSTM achieve very low MSE over the SD as seen during training. CRU also achieves very low MSE, although not as much. The results over the OOD data show that the high performance over SD came with a cost -- the better a model is over SD the worse it is over OOD. The results of LSTM 1:1 are not surprising, it sees the control signal only at observation-times, so it cannot exploit the hidden information within the control signal. NESDE does not ignore such information, while maintaining the robustness w.r.t.~control.

\vspace{-0.1cm}
\begin{figure*}[h]
\centering
\begin{subfigure}{.284\linewidth}
  \centering
  \includegraphics[width=1\linewidth]{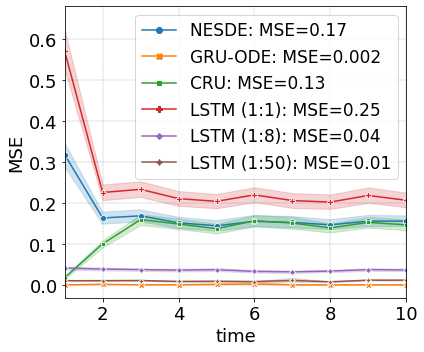}
  \caption{Same control distribution}
  \label{fig:res_oracle}
\end{subfigure}
\begin{subfigure}{.284\linewidth}
  \centering
  \includegraphics[width=1\linewidth]{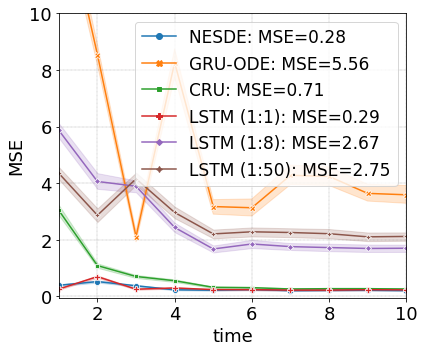}
  \caption{Out of distribution control}
  \label{fig:res_oracle_ood}
\end{subfigure}
\vspace{-0.3cm}
\caption{\footnotesize MSE for predictions under regular time samples, where the control signal is correlated to the measure we wish to predict, even in times when it is unobserved. (a) Shows the results for a test set that has the same correlation between the control and the predictable measure as in the train set. (b) present the MSE for a different test set, with different correlation. Notice the different scales of the graphs.}
\label{fig:oracle}
\end{figure*}

\FloatBarrier
\subsection{Sparse Observations}
\label{sec:sparsity}
This experiment addresses the sparsity of each trajectory. We use the same benchmark as in \cref{sec:synthetic_experiments} and generate 4 train datasets, each one contains 400 trajectories, and a test set of 1000 trajectories. In each train-set, the trajectories have the same number of data samples, which varies between datasets (4,6,8,10). The test-set contains trajectories of varying number of observations, over the same support. For each train-set, we train all the models until convergence, and test them. \cref{fig:sparsity} presents the MSE over the test set, for both the complex and the real eigenvalues settings. It is noticeable that even with very sparse observations, NESDE achieves good performance. Here, GRU-ODE-Bayes appears to be more sample-efficient than CRU and LSTM, but it is less sample efficient than NESDE.

\begin{figure}[ht]
\centering
\begin{subfigure}{0.32\linewidth}
  \centering
  \includegraphics[width=1\linewidth]{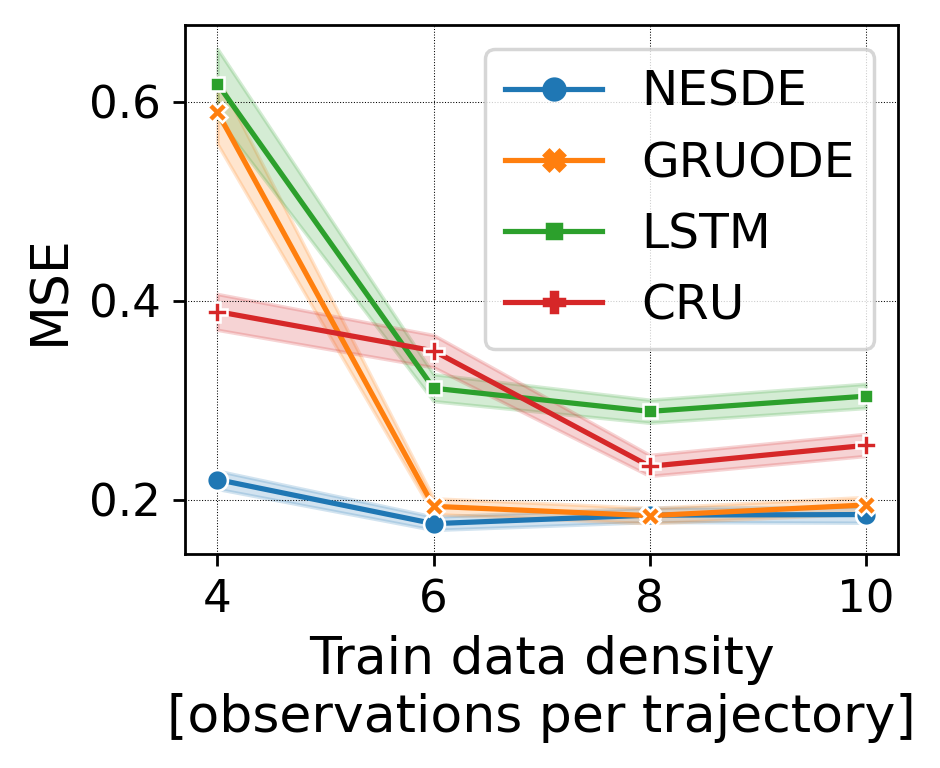}
  \caption{Complex dynamics}
  \label{fig:sparse_comp}
\end{subfigure}
\begin{subfigure}{0.32\linewidth}
  \centering
  \includegraphics[width=1\linewidth]{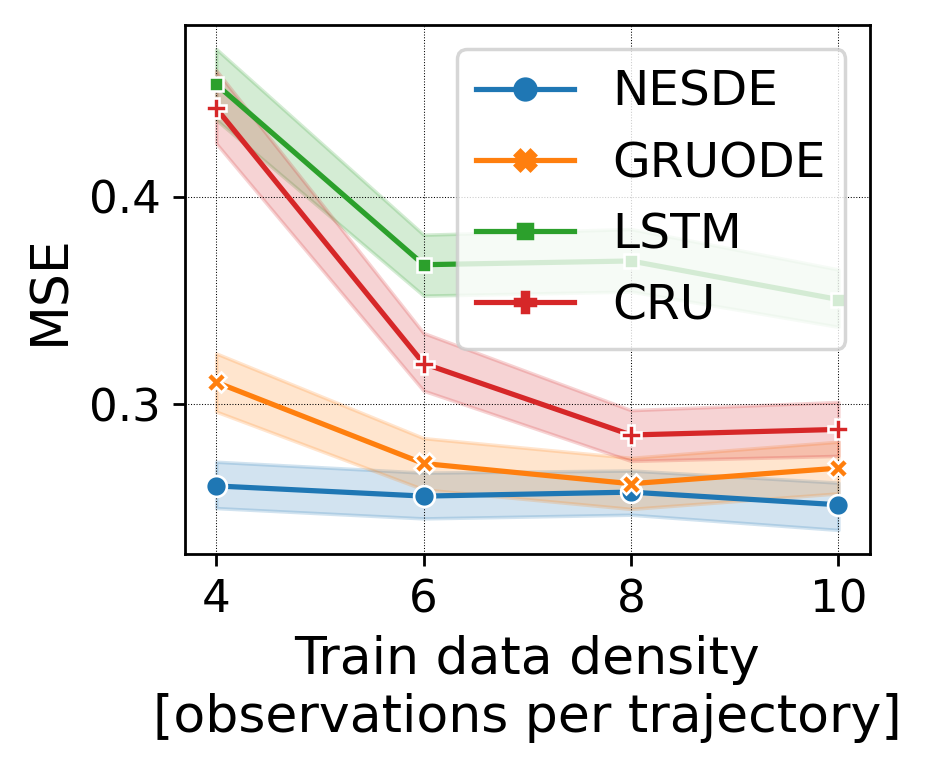}
  \caption{Real dynamics}
  \label{fig:sparse_real}
\end{subfigure}
\vspace{-0.3cm}
\caption{\footnotesize Test MSE vs.~train observations-per-trajectory. 95\% confidence intervals are calculated over 1000 test trajectories.}
\label{fig:sparsity}
\end{figure}

\FloatBarrier

\subsection{Comparison to ODE-based Methods}
\label{sec:gru_experiments}

\cref{sec:synthetic_experiments} compares NESDE to GRU-ODE-Bayes~\citep{gru_ode} -- a recent ODE-based method that can provide an uncertainty estimation (which is a typical requirement in medical applications). Similarly to other recent ODE-based methods~\citep{chen2018neural}, GRU-ODE-Bayes relies on a non-linear neural network model for the differential equation. GRU-ODE-Bayes presents relatively poor prediction accuracy in \cref{sec:synthetic_experiments}, which may be partially attributed to the benchmark settings. First, the benchmark required GRU-ODE-Bayes to handle a control signal. As proposed in \citet{gru_ode}, we incorporated the control as part of the observation space. However, such a control-observation mix raises time synchrony issues (e.g., most training input samples include only control signal without observation) and even affect the training supervision (since the new control dimension in the state space affects the loss). Second, as discussed above, the piecewise linear dynamics of NESDE provide higher sample efficiency in face of the 1000 training trajectories in \cref{sec:synthetic_experiments}.

\begin{figure}[h]
    \centering
    \includegraphics[width=.38\linewidth]{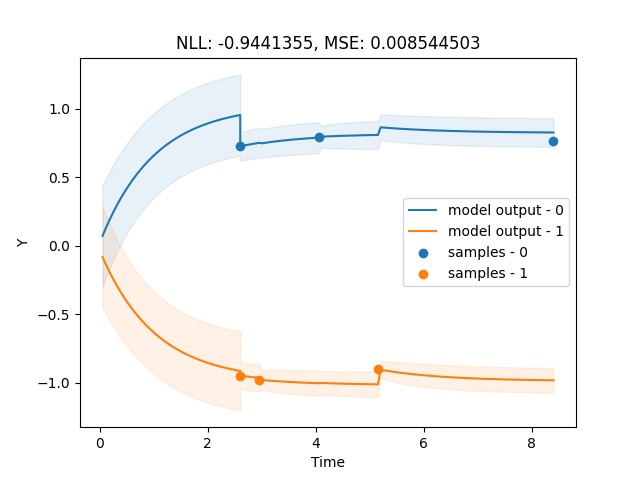}
    \caption{\small A sample test trajectory of the sparsely-observable OU process. The observations and the NESDE predictions (based on training over 400 trajectories) are presented separately for each of the two dimensions of the process.}
    \label{fig:ou_sample}
\end{figure}

In this section, we explicitly study the sample efficiency of NESDE vs.~GRU-ODE-Bayes in a problem with no control signal. Specifically, we generate data from the \href{https://github.com/edebrouwer/gru_ode_bayes}{\underline{GitHub repository}} of \citet{gru_ode}. The data consists of irregular samples of the two-dimensional Ornstein-Uhlenbeck process, which follows the SDE
$$
dx_t =\theta(\mu - x_t)dt + \sigma dWt ,
$$
where the noise follows a Wiener process, which is set in this experiment to have the covariance matrix
$$
Cov = \begin{pmatrix} 1 & 0.5 \\ 0.5 & 1
\end{pmatrix} .
$$

The process is sparsely-observed: we use a sample rate of $0.6$ (approximately $6$ observations for 10 time units). Each sampled trajectory has a time support of 10 time units. The process has two dimensions, and each observation can include either of the dimensions or both of them. The dynamics of the process are linear and remain constant for all the trajectories; however, the stable ``center" of the dynamics of each trajectory (similarly to $\alpha$ in \cref{eq:NESDE_model}) is sampled from a uniform distribution, increasing the difficulty of the task and requiring to infer $\alpha$ in an online manner.

\cref{fig:ou_sample} presents a sample of trajectory observations along with the corresponding predictions of the NESDE model (trained over 400 trajectories). Similarly to \citet{gru_ode}, the models are tested over each trajectory by observing all the measurements from times $t\le4$, and then predicting the process at the times of the remaining observations until the end of the trajectory.

\begin{figure}[h]
    \centering
    \includegraphics[width=.6\linewidth]{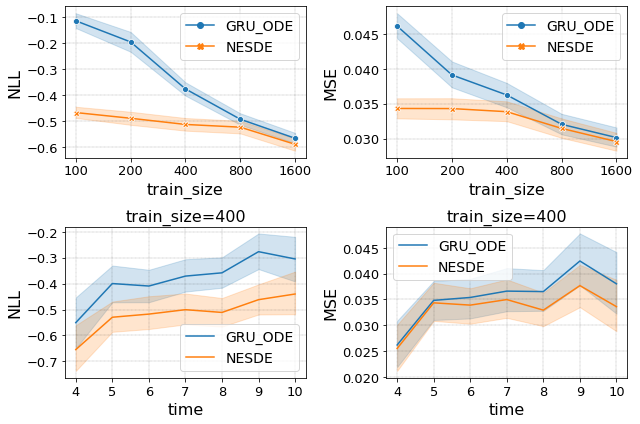}
    \caption{\small Top: losses of NESDE and GRU-ODE-Bayes over the OU benchmark, along with confidence intervals of 95\% over the test trajectories. NESDE demonstrates higher data efficiency, as its deterioration in small training datasets is moderate in comparison to GRU-ODE-Bayes. Bottom: errors vs. time, given 400 training trajectories, where all the test predictions rely on observations from times $t\le4$. The advantage of NESDE becomes larger as the prediction horizon is longer.}
    \label{fig:gru_comp}
\end{figure}

To test for data efficiency, we train both models over training datasets with different numbers of trajectories. As shown in \cref{fig:gru_comp}, the sparsely-observable setting with limited training data causes GRU-ODE-Bayes to falter, whereas NESDE learns robustly in this scenario. The advantage of NESDE over GRU-ODE-Bayes increases when learning from smaller datasets (\cref{fig:gru_comp}, top), or when predicting for longer horizons (\cref{fig:gru_comp}, bottom). This demonstrates the stability and data efficiency of the piecewise linear dynamics model of NESDE in comparison to non-linear ODE models.

\section{Medication Dosing Prediction: Implementation Details}
\label{sec:medical_detailed}
Below, we elaborate on the implementation details of \cref{sec:real_experiments}.

\subsection{Data preprocessing}
\label{sec:preprocessing_medical}

\textbf{Heparin:}
We derive our data from the MIMIC-IV dataset \citep{Johnson_2021}, available under the \href{https://physionet.org/content/mimiciv/view-license/0.4/}{\underline{PhysioNet Credentialed Health Data License}}. For the UH dosing dataset, we extract the patients that were given UH during their intensive care unit (ICU) stay. We exclude patients that were treated with discrete (not continuous) doses of UH, or with other anticoagulants; or that were tested for aPTT less than two times. The control signal (UH dosing rate) is normalized by the patient weight. Each trajectory of measurements is set to begin one hour before the first UH dose, and is split in the case of 48 hours without UH admission. This process resulted with $5866$ trajectories, containing a continuous UH signal, an irregularly-observed aPTT signal, and discretized context features. Note that we do not normalize the aPTT values.

\textbf{Vancomycin:}
The VM dosing dataset derived similarly, from patients who received VM during their ICU stay, where we consider only patients with at least $2$ VM concentration measurements. Each trajectory begins at the patient's admission time, and we also split in the case of 48 hours without VM dosage. Additionally, we add an artificial observation of $0$ at time $t=0$, as the VM concentration is $0$ before any dose was given (we do not use these observations when computing the error).

\textbf{General implementation details:}
For each train trajectory, we only sample some of the observations, to enforce longer and different prediction horizons, which was found to aid the training robustness. Hyperparameters (e.g., learning rate) were chosen by trial-and-error with respect to the validation-set (separately for each model).

Context variables $C$ are used in both domains. We extract $42$ features, some measured continuously (e.g., heart rate, blood pressure), some discrete (e.g., lab tests, weight) and some static (e.g., age, background diagnoses). Each feature is averaged (after removing outliers) over a fixed time-interval of four hours, and then normalized.

\subsection{LSTM Baseline Implementation}
\label{sec:lstm_medical}
The LSTM module we use as a baseline has been tailored specifically to the setting:
\begin{enumerate}
    \item It includes an embedding unit for the context, which is updated whenever a context is observed, and an embedded context is stored for future use.
    \item The inputs for the module include the embedded context, the previous observations, the control signal and the time difference between the current time and the next prediction time.
    \item Where the control signal is piecewise constant: any time it changes we produce predictions (even though no sample is observed) that are then used as an input for the model, to model the effect of the UH more accurately.
\end{enumerate}
We train it with the same methodology we use for NESDE where the training hyperparameters chosen by the best performance over the validation data.

\textbf{Architecture for the medication dosing benchmarks}:
The model contains two fully connected elements: one for the context, with two hidden layers of size $32$ and $16$-dimensional output which is fed into a $Tanh$ activation; the second one uses the LSTM output to produce a one-dimensional output, which is fed into a ReLU activation to produce positive outputs, its size determined by the LSTM dimensions. The LSTM itself has an input of $19$ dimensions; $16+1+1+1$ for the context, control, previous observations and the time interval to predict. It has a hidden size of $64$ and two recurrent layers, with dropout of $0.2$. All the interconnections between the linear layers include ReLU activations.

\textbf{Architecture for the synthetic data benchmarks}:
Here, there is no context, then the model contains one fully connected element that receives the LSTM output and has two linear layers of sizes $32$ and $1$ with a Tanh activation between them. The LSTM has an input of $3$ dimensions; for the state, control signal, and the time interval to predict. It has a hidden size of $32$ and two recurrent layers, with dropout of $0.2$. 

\end{document}